\documentclass{article}

\usepackage{microtype}
\usepackage{graphicx}
\usepackage{subcaption}
\usepackage{booktabs} % for professional tables
\usepackage{enumitem}

\usepackage{hyperref}

\usepackage[accepted]{icml2026}

\usepackage{amsmath}
\usepackage{amssymb}
\usepackage{mathtools}
\usepackage{amsthm}

\usepackage[utf8]{inputenc}
\usepackage[table]{xcolor} % For coloring rows/lines
\usepackage{makecell}   % For line breaks within cells

\usepackage[capitalize,noabbrev]{cleveref}
\usepackage{mdframed}

\theoremstyle{plain}
\newtheorem{theorem}{Theorem}[section]
\newtheorem{proposition}[theorem]{Proposition}

\theoremstyle{definition}

\theoremstyle{remark}

\usepackage[textsize=tiny]{todonotes}

\icmltitlerunning{The Lie We Tell}

\begin{document}

\twocolumn[
  \icmltitle{The Lie We Tell: Correcting the Euclidean Fallacy in Vision Language Action Policies via Score Matching on Tangent Space}

  % It is OKAY to include author information, even for blind submissions: the
  % style file will automatically remove it for you unless you've provided
  % the [accepted] option to the icml2026 package.

  % List of affiliations: The first argument should be a (short) identifier you
  % will use later to specify author affiliations Academic affiliations
  % should list Department, University, City, Region, Country Industry
  % affiliations should list Company, City, Region, Country

  % You can specify symbols, otherwise they are numbered in order. Ideally, you
  % should not use this facility. Affiliations will be numbered in order of
  % appearance and this is the preferred way.
  \icmlsetsymbol{equal}{*}

  \begin{icmlauthorlist}
    \icmlauthor{Bing-Cheng Chuang}{equal,ntu}
    \icmlauthor{I-Hsuan Chu}{equal,ntu}
    \icmlauthor{Bor-Jiun Lin}{ntu}
    \icmlauthor{YuanFu Yang}{nycu}
    \icmlauthor{Min Sun}{nthu}
    \icmlauthor{Chun-Yi Lee}{ntu}
    % \icmlauthor{Firstname7 Lastname7}{comp}
    %\icmlauthor{}{sch}
    % \icmlauthor{Firstname8 Lastname8}{sch}
    % \icmlauthor{Firstname8 Lastname8}{yyy,comp}
    %\icmlauthor{}{sch}
    %\icmlauthor{}{sch}
  \end{icmlauthorlist}

  \icmlaffiliation{ntu}{Department of Computer Science and Information Engineering \& Artificial Intelligence Center of Research Excellence, National Taiwan University, Taipei, Taiwan}
  \icmlaffiliation{nycu}{Institute of Artificial Intelligence Innovation, National Yang Ming Chiao Tung University, Hsinchu, Taiwan}
  \icmlaffiliation{nthu}{Department of Electrical Engineering, National Tsing Hua University, Hsinchu, Taiwan}

  \icmlcorrespondingauthor{Chun-Yi Lee}{ 	cylee@csie.ntu.edu.tw}
  % \icmlcorrespondingauthor{Firstname2 Lastname2}{first2.last2@www.uk}

  % You may provide any keywords that you find helpful for describing your
  % paper; these are used to populate the "keywords" metadata in the PDF but
  % will not be shown in the document
  \icmlkeywords{Vision Language Action Model, Diffusion Model, Lie Group, Generative Model}

  \vskip 0.3in
]

% this must go after the closing bracket ] following \twocolumn[ ...

% This command actually creates the footnote in the first column listing the
% affiliations and the copyright notice. The command takes one argument, which
% is text to display at the start of the footnote. The \icmlEqualContribution
% command is standard text for equal contribution. Remove it (just {}) if you
% do not need this facility.

% Use ONE of the following lines. DO NOT remove the command.
% If you have no special notice, KEEP empty braces:
% \printAffiliationsAndNotice{}  % no special notice (required even if empty)
% Or, if applicable, use the standard equal contribution text:
\printAffiliationsAndNotice{\icmlEqualContribution}

\begin{abstract}
Diffusion-based Vision-Language-Action policies achieve remarkable success in robotic manipulation, yet commit a fundamental geometric error we term the \textbf{Euclidean Fallacy}: representing SE(3) poses as flat $\mathbb{R}^{12}$ vectors. This approximation induces (1) manifold drift violating SO(3) constraints, (2) broken equivariance under coordinate transformations, and (3) non-geodesic trajectories with excessive kinematic cost. We introduce \textbf{Lie Diffuser Actor (LDA)}, a diffusion framework operating intrinsically on SE(3). Our method injects noise through left-invariant SDEs, predicts scores in the tangent space, and retracts samples via the exponential map. This formulation eliminates manifold drift by construction while guaranteeing coordinate-frame equivariance and geodesic optimality. On CALVIN ABC$\rightarrow$D, LDA improves average task length from  $3.27$ to $3.51$ ($+7.3\%$). We further validate our method on real robot and the results show that our methodology outperforms the baseline on majority tasks. 
\end{abstract}

\section{Introduction}
\label{sec::introduction}

\begin{figure}[t]
    \centering
    % -- Left Side: Image (Adjust width percentage as needed) --
    \begin{minipage}[c]{0.6\linewidth}
        \centering
        % Note: \linewidth here refers to the width of this minipage, not the column
        \includegraphics[width=\linewidth]{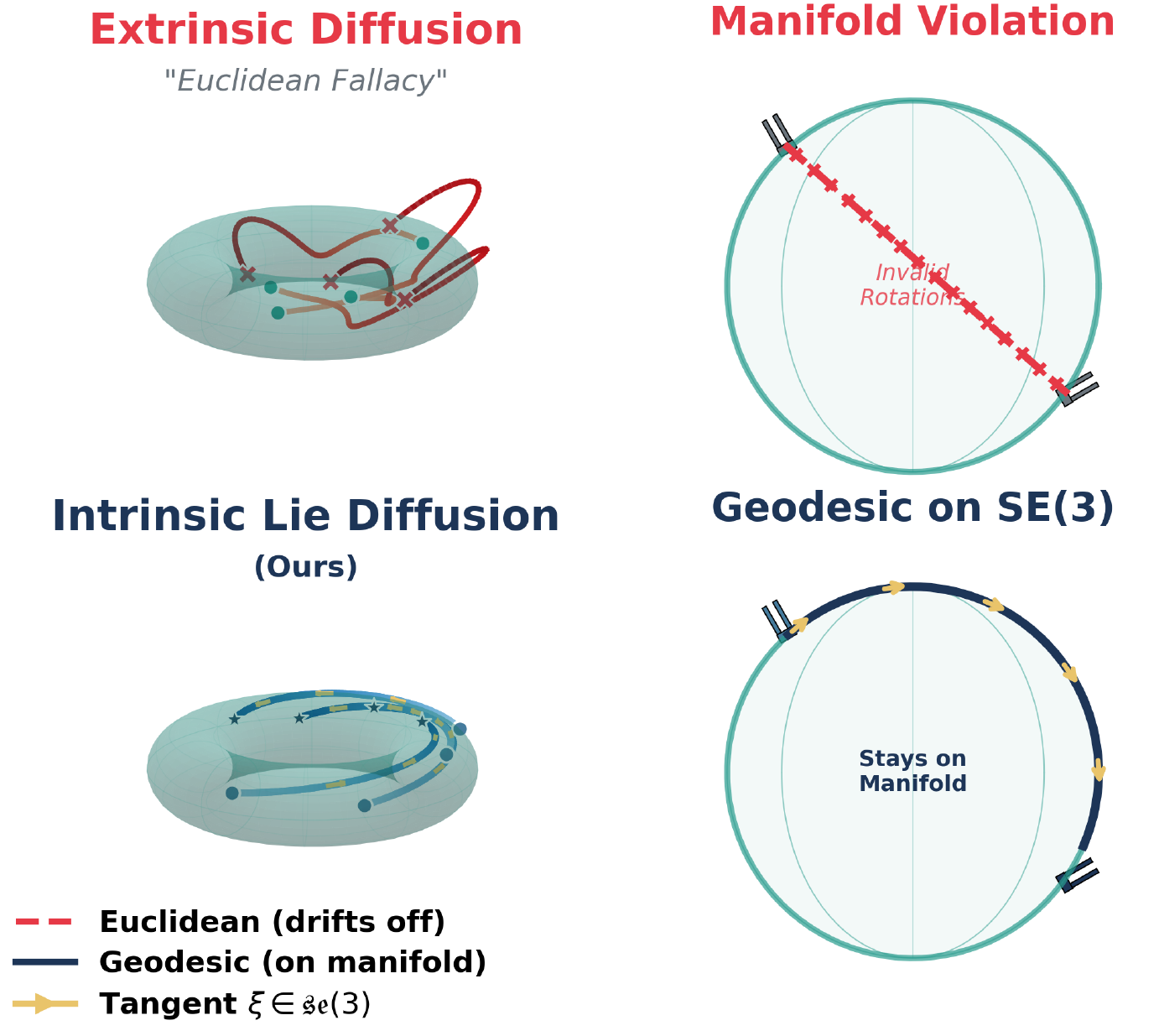}
    \end{minipage}
    \hfill
    % -- Right Side: Caption --
    \begin{minipage}[c]{0.38\linewidth}
        % Using \small or \footnotesize helps fit long captions in narrow spaces
        \caption{\small 
        \textbf{Euclidean Diffusion vs.\ Lie Diffusion.} (Top) Extrinsic diffusion treats the curved $\mathrm{SE}(3)$ manifold as flat $\mathbb{R}^n$, which causes drift and produces invalid geometries. (Bottom) The proposed Lie Diffuser Actor injects noise as tangent twists $\boldsymbol{\xi} \in \mathfrak{se}(3)$, ensuring that trajectories follow valid geodesics.
        % \textbf{The Euclidean Diffusion vs. Lie Diffusion.} (Top) Extrinsic diffusion treats the curved SE(3) manifold as flat $\mathbb{R}^n$, causing drift and invalid geometries. (Bottom) The proposed Lie Diffuser Actor injects noise as tangent twists $\xi \in \mathfrak{se}(3)$, ensuring trajectories follow valid geodesics.
        }
        \label{fig:teaser}
    \end{minipage}
    \vspace{-1em}
\end{figure}

Vision-Language-Action (VLA) policies have transformed robotic manipulation, enabling robots to execute complex tasks through instructions and visual observations. Central to this progress is pose trajectory generation: specifying how a robot gripper should position and orient itself over time. Early VLM-based policies~\cite{kim2024openvlaopensourcevisionlanguageactionmodel, qu2025spatialvlaexploringspatialrepresentations, liu2025hybridvlacollaborativediffusionautoregression, zhang2025vtlavisiontactilelanguageactionmodelpreference, cen2025worldvlaautoregressiveactionworld, brohan2023rt1roboticstransformerrealworld, brohan2023rt2visionlanguageactionmodelstransfer} formulated control as token generation, but discretization introduces quantization errors and loses geometric structure essential for precise manipulation. Hierarchical approaches~\cite{nvidia2025gr00tn1openfoundation, li2024cogactfoundationalvisionlanguageactionmodel, cui2025openhelixshortsurveyempirical, black2026pi0visionlanguageactionflowmodel, intelligence2025pi05visionlanguageactionmodelopenworld, cheang2025gr3technicalreport, chen2025villaxenhancinglatentaction, wen2025diffusionvlageneralizableinterpretablerobot, huang2025thinkactvisionlanguageactionreasoningreinforced} separate semantic reasoning from trajectory generation, yet geometric inconsistencies persist in low-level controllers. Trajectory-level diffusion policies~\cite{liu2024rdt, cheng2024universal, ze20243d, ke20243d} represent the current state of the art, treating entire action sequences as samples from a generative model. Despite capturing multimodal behaviors and achieving superior long-horizon consistency, a fundamental bottleneck remains: these methods parameterize SE(3) poses as flat Euclidean vectors and apply additive Gaussian noise, violating the Lie group structure and forfeiting equivariance to rigid transformations.

This geometric mismatch constitutes what we term the \textbf{Euclidean Fallacy}: the structural incompatibility between Euclidean noise injection and the SE(3) manifold, rather than a criticism of the engineering quality of prior diffusion-based policies. The Special Euclidean group $\mathrm{SE}(3)$ is a curved Riemannian manifold, not a flat vector space, and the operations that are valid in $\mathbb{R}^n$ do not preserve the underlying rigid transformation structure. As illustrated in Fig.~\ref{fig:teaser}~(top), linear interpolation departs from the valid manifold and traverses physically unrealizable configurations such as non-orthogonal rotation matrices. Three failure modes emerge from this approximation. First, \textit{manifold drift}: generated poses violate $\mathrm{SO}(3)$ constraints and force networks to learn correction projections rather than manipulation semantics. Second, \textit{broken equivariance}: Euclidean noise distributions do not transform covariantly under rigid transformations of the workspace. As a result, the learned score function becomes coordinate-frame dependent rather than intrinsic to the manipulation task. Third, \textit{suboptimal trajectories}: Euclidean interpolation fails to capture screw motions, the kinematically natural helical paths of Chasles' theorem.
% This geometric mismatch constitutes what we term the \textbf{Euclidean Fallacy}. The Special Euclidean group SE(3) is a curved Riemannian manifold, not a flat vector space, and operations valid in $\mathbb{R}^n$ do not preserve rigid transformation structure. As illustrated in Fig.~\ref{fig:teaser}~(top), linear interpolation departs from the valid manifold and traverses physically unrealizable configurations such as non-orthogonal rotation matrices. Three failure modes emerge from this approximation. First, \textit{manifold drift}: generated poses violate SO(3) constraints, forcing networks to learn correction projections rather than manipulation semantics. Second, \textit{broken equivariance}: Euclidean noise distributions do not transform consistently under coordinate changes, causing performance degradation in rotated workspaces. Third, \textit{suboptimal trajectories}: Euclidean interpolation fails to capture screw motions, the kinematically natural helical paths prescribed by Chasles' theorem.

We introduce \textbf{Lie Diffuser Actor (LDA)}, a diffusion framework operating intrinsically on SE(3). Our method defines forward noising through left-invariant SDEs, injecting noise as velocity twists $\xi \in \mathfrak{se}(3)$ and mapping samples onto the manifold via the exponential map. This construction provides three theoretical guarantees. First, left-invariant noise injection eliminates manifold drift by construction (\textbf{Proposition}~\ref{prop:manifold}), as the exponential map guarantees $\exp(\xi) \in \mathrm{SE}(3)$ for any $\xi \in \mathfrak{se}(3)$. Second, the resulting policy satisfies left-invariant equivariance (\textbf{Theorem}~\ref{thm:equivariance}): transforming the workspace by $h \in \mathrm{SE}(3)$ produces equivalently transformed outputs without retraining. Third, reverse-time dynamics generate Riemannian geodesics (\textbf{Proposition}~\ref{prop:geodesic}), yielding kinematically optimal screw motions. Full proofs appear in Appendix~\ref{app:proof_geodesic}.

Empirical results validate these theoretical properties. On CALVIN ABC$\rightarrow$D, Lie Diffuser Actor improves average task length from $3.27$ to $3.51$ ($+7.3\%$). Real-robot experiments further confirm that geometric consistency translates to reliable physical deployment. Cross-architecture validation on OpenVLA-OFT, shows that $\mathrm{SE}(3)$ score matching improves LIBERO Long success rate from $92.20$ to $94.13$, demonstrating that the benefit originates from the Lie formulation itself. Furthermore, our ablations demonstrate that intrinsic geometry and architectural modifications contribute independently to these gains.
The primary contributions of this work are summarized as follows:
\begin{itemize}
    \item We identify and formalize the Euclidean Fallacy in diffusion-based VLA policies, revealing systematic manifold drift and broken equivariance under coordinate transformations. \vspace{-0.5em}
    \item We propose Lie Diffuser Actor with three proven guarantees: manifold drift elimination through group closure, left-invariant equivariance for coordinate-frame robustness, and geodesic trajectory generation yielding kinematically optimal screw motions. \vspace{-0.5em}
    \item We demonstrate consistent improvements on CALVIN, OpenVLA-OFT, and real-robot experiments, with zero-shot generalization to unseen workspaces confirming the practical value of intrinsic geometric consistency.
\end{itemize} \vspace{-0.5em}

\section{Preliminaries}
\label{sec::preliminary}
\begin{figure*}[ht]
    \centering
    \begin{minipage}[c]{0.72\textwidth}
        \centering
        \begin{subfigure}[t]{0.48\linewidth}
            \includegraphics[width=\linewidth]{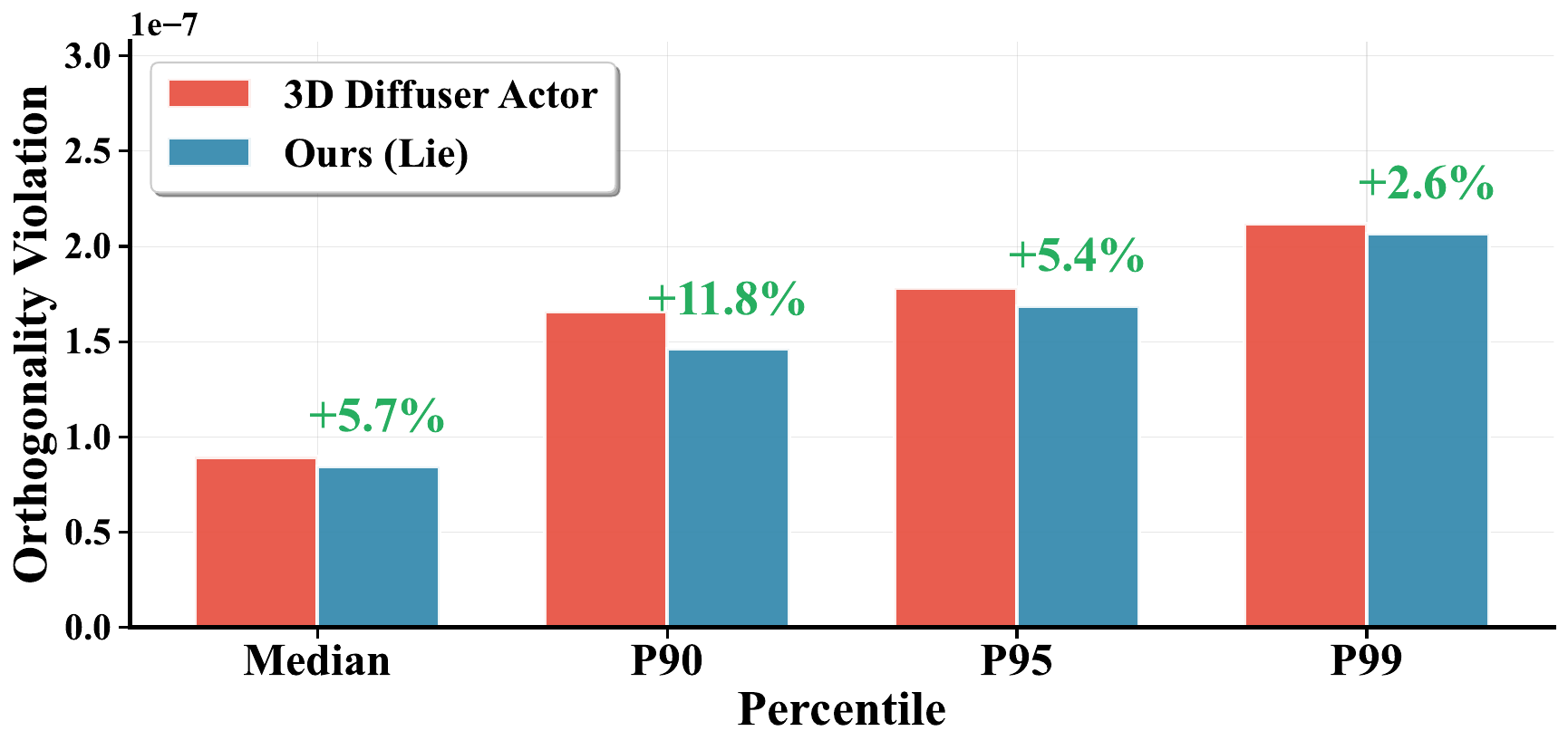}
            \vspace{-2em}
            \caption{\scriptsize Orthogonality Percentile}
            \label{fig:first}
        \end{subfigure}
        \hfill
        \begin{subfigure}[t]{0.48\linewidth}
            \includegraphics[width=\linewidth]{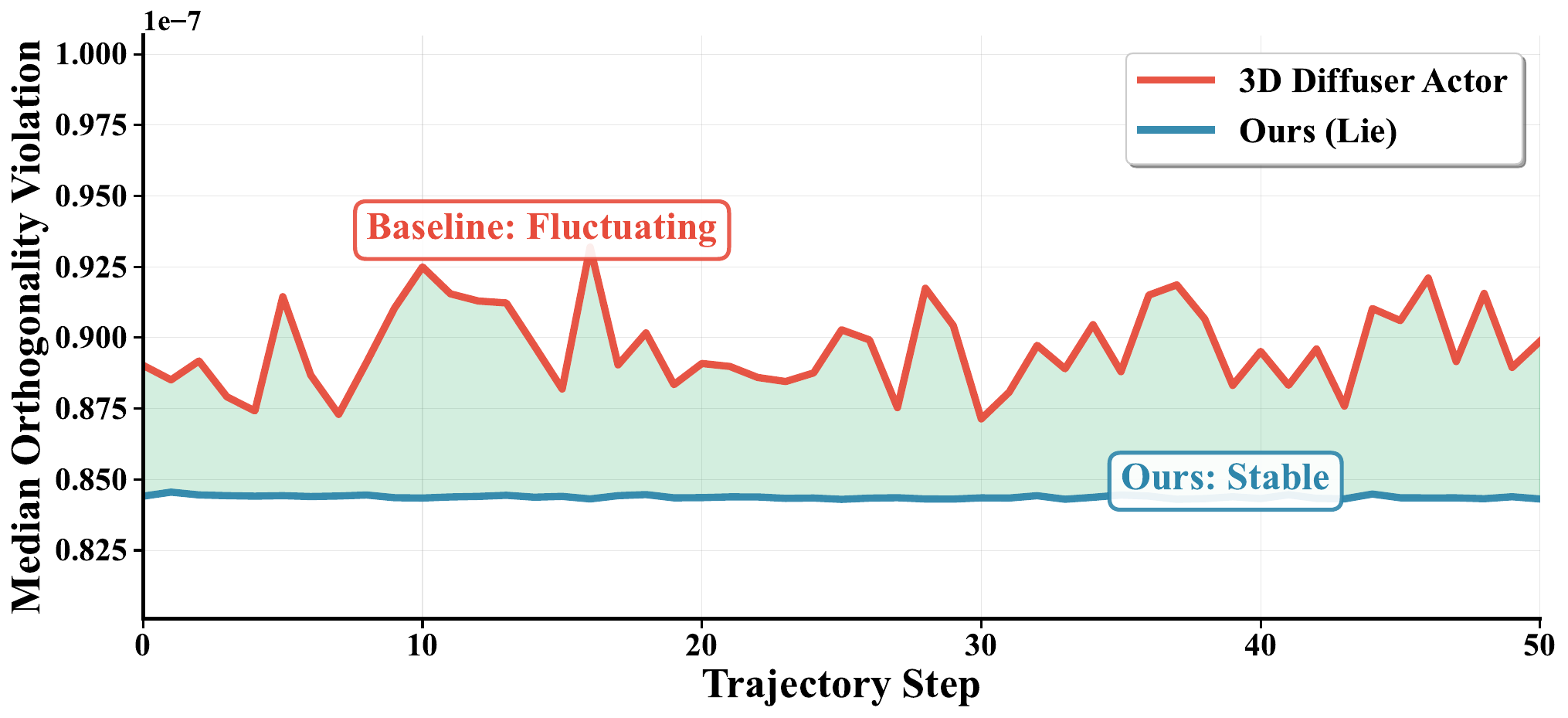}
            \vspace{-2em}
            \caption{\scriptsize Violation Across Trajectory}
            \label{fig:second}
        \end{subfigure}
    \end{minipage}
    \hfill
    \begin{minipage}[c]{0.27\textwidth}
        \vspace{-1em}
        \caption{SO(3) constraint violation analysis comparing 3D Diffuser Actor and our Lie group representation. (a) Orthogonality violation across percentiles. (b) Median violation across trajectory steps. Our method (blue) maintains stable, low violations.}
        \label{fig:motivation}
    \end{minipage}
    \vspace{-1.5em}
\end{figure*}

In this section, we provide the fundamental background knowledge for this literature. Section~\ref{subsec:geometry_of_se3} reviews the geometry of $\mathrm{SE}(3)$ and introduces the Lie algebraic structure. Section~\ref{subsec:problem_formulation} formalizes the policy learning problem and contrasts the Euclidean and the intrinsic perturbation models.
% In this section, we provide fundamental knowledge for this literature. Section~\ref{subsec:geometry_of_se3} reviews the geometry of SE(3) and introduces the Lie algebraic structure. Section~\ref{subsec:problem_formulation} formalizes the policy learning problem and contrasts Euclidean and intrinsic perturbation models.

\subsection{Geometry of $\mathrm{SE}(3)$}
\label{subsec:geometry_of_se3}
% \textbf{The Special Euclidean Group.} A rigid-body configuration in three-dimensional space is represented by an element of the Special Euclidean group:
\textbf{The Special Euclidean Group.} A rigid-body configuration in three-dimensional space is represented by an element of the Special Euclidean group, which is defined as follows:
\vspace{-0.5em}
\begin{equation}
\mathrm{SE}(3) = \left\{ \begin{pmatrix} R & \mathbf{t} \\ \mathbf{0}^\top & 1 \end{pmatrix} : R \in \mathrm{SO}(3), \, \mathbf{t} \in \mathbb{R}^3 \right\},
\end{equation}
% \vspace{-0.5em}
where $\mathrm{SO}(3) = \{R \in \mathbb{R}^{3 \times 3} : R^\top R = I, \det(R) = 1\}$ is the Special Orthogonal group representing all valid 3D rotations. Each element $g \in \mathrm{SE}(3)$ is a $4 \times 4$ homogeneous transformation matrix encoding both a rotation $R$ and a translation vector $\mathbf{t}$. The group operation corresponds to sequential rigid transformations: applying transformation $(R_1, \mathbf{t}_1)$ followed by $(R_2, \mathbf{t}_2)$ yields the composition $(R_1, \mathbf{t}_1) \cdot (R_2, \mathbf{t}_2) = (R_1 R_2, R_1 \mathbf{t}_2 + \mathbf{t}_1)$. Geometrically, $\mathrm{SE}(3)$ forms a six-dimensional Lie group and a curved Riemannian manifold, meaning it locally resembles Euclidean space but has global curvature. Standard Euclidean operations such as vector addition are geometrically undefined on this manifold. Adding two rotation matrices does not generally produce a valid rotation, and averaging poses in the ambient space yields configurations that violate the orthogonality constraint $\mathbf{R}^\top \mathbf{R} = \mathbf{I}$. This non-linear geometric structure imposes fundamental constraints on probabilistic modeling and requires manifold-aware diffusion techniques.

\textbf{Lie Algebra and Exponential Map.} The Lie algebra $\mathfrak{se}(3)$ is the tangent space to the manifold $\mathrm{SE}(3)$ at the identity element, providing a linearized representation of the group structure. Elements of $\mathfrak{se}(3)$ are called \textit{twists} and represent infinitesimal rigid motions. A twist $\boldsymbol{\xi} = (\boldsymbol{\omega}, \mathbf{v}) \in \mathbb{R}^6$ consists of an angular velocity component $\boldsymbol{\omega} \in \mathbb{R}^3$ describing rotation and a linear velocity component $\mathbf{v} \in \mathbb{R}^3$ describing instantaneous translation. Unlike curved manifold $\mathrm{SE}(3)$, the Lie algebra $\mathfrak{se}(3)$ is a flat vector space where standard linear operations are well-defined, making it the natural domain for adding Gaussian noise in diffusion models.

The exponential map $\exp: \mathfrak{se}(3) \to \mathrm{SE}(3)$ provides crucial bridge between the flat tangent space and the curved manifold, transforming twists into finite rigid transformations:

\vspace{-0.5em}
\begin{equation}
\label{eq:exp_map}
\exp(\boldsymbol{\xi}) = \begin{pmatrix} \exp_{\mathrm{SO}(3)}(\boldsymbol{\omega}) & V(\boldsymbol{\omega})\mathbf{v} \\[0.3em] \mathbf{0}^\top & 1 \end{pmatrix},
\end{equation}
\vspace{-0.5em}

where $\exp_{\mathrm{SO}(3)}(\boldsymbol{\omega}) = I + \frac{\sin\theta}{\theta}[\boldsymbol{\omega}]_\times + \frac{1-\cos\theta}{\theta^2}[\boldsymbol{\omega}]_\times^2$ is the Rodrigues formula with $\theta = \|\boldsymbol{\omega}\|$ representing the rotation angle, $[\cdot]_\times$ denotes the skew-symmetric matrix operator (i.e., $[\boldsymbol{\omega}]_\times \mathbf{x} = \boldsymbol{\omega} \times \mathbf{x}$ for any vector $\mathbf{x}$), and $V(\boldsymbol{\omega}) = I + \frac{1-\cos\theta}{\theta^2}[\boldsymbol{\omega}]_\times + \frac{\theta - \sin\theta}{\theta^3}[\boldsymbol{\omega}]_\times^2$ is left Jacobian of $\mathrm{SO}(3)$ that accounts for the coupling between rotation and translation. Geometrically, $\exp(\boldsymbol{\xi})$ yields the finite transformation obtained by integrating a constant-velocity motion with twist for unit time, producing a screw motion that combines rotation on an axis with translation along that axis.

A fundamental property exploited throughout this work is that the exponential map is \textit{surjective}: for any twist $\boldsymbol{\xi} \in \mathfrak{se}(3)$, the result $\exp(\boldsymbol{\xi})$ is guaranteed to be a valid element of $\mathrm{SE}(3)$ satisfying all manifold constraint.
\vspace{-0.5em}

\subsection{Problem Formulation}
\label{subsec:problem_formulation}

The robot receives visual observations $\mathcal{V} = \{I_1, \ldots, I_K\}$ from $K$ cameras and a language instruction $\mathcal{L}$. The policy generates an end-effector trajectory $\mathbf{g} = (g^1, \ldots, g^H) \in \mathrm{SE}(3)^H$ over horizon $H$. We formulate this as learning a conditional generative model $p_\theta(\mathbf{g} \mid \mathcal{V}, \mathcal{L})$ through a denoising diffusion process.

\textbf{Euclidean Formulation.} Standard diffusion policies parameterize poses as vectors $\mathbf{x} \in \mathbb{R}^{12}$ (flattened rotation matrix plus translation) and define the forward process as follows:
\begin{equation}
\label{eq:euclidean_forward}
\mathbf{x}_t = \mathbf{x}_0 + \sigma_t \boldsymbol{\epsilon}, \quad \boldsymbol{\epsilon} \sim \mathcal{N}(\mathbf{0}, \mathbf{I}),
\end{equation}
where $\sigma_t$ is a noise schedule. This formulation treats SE(3) as a vector space, inducing two problems: (1) intermediate states $\mathbf{x}_t$ violate SO(3) constraints since adding Gaussian noise to a rotation matrix yields a non-orthogonal matrix with probability one, and (2) the additive noise distribution does not transform covariantly under coordinate changes.

\textbf{Intrinsic Formulation.} We define diffusion directly on the manifold via left-invariant SDEs. For $g_t \in \mathrm{SE}(3)$:
\begin{equation}
\label{eq:intrinsic_forward}
g_t = g_0 \cdot \exp(\sigma_t \boldsymbol{\xi}), \quad \boldsymbol{\xi} \sim \mathcal{N}(\mathbf{0}, \mathbf{I}_6).
\end{equation}
This multiplicative perturbation model ensures   for all $t$ by construction (Proposition~\ref{prop:manifold}), as the exponential map guarantees valid outputs and SE(3) is closed under multiplication. The left-invariant structure further induces coordinate-frame equivariance (Theorem~\ref{thm:equivariance}): the score function transforms covariantly under rigid actions on the workspace, ensuring that the learned policy is a property of the task rather than of its coordinate representation. 
\vspace{-1em}

\section{Geometric Cost of Euclidean Embeddings}
\label{sec:motivation}
The dominant paradigm in diffusion-based manipulation policies treats $\mathrm{SE}(3)$ poses as vectors in $\mathbb{R}^n$ and applies additive Gaussian noise during the forward process. This section examines whether such approximation incurs measurable costs through motivational experiments that reveal geometric errors with consequences for manipulation.

We begin by investigating rotation constraint violation. For any valid rotation matrix $R \in \mathrm{SO}(3)$, the orthogonality condition $R^\top R = I$ must hold exactly. We quantify deviations through the orthogonality error $\epsilon_{\mathrm{orth}} = \|R^\top R - I\|_F$ and compare 3D Diffuser Actor~\citep{ke20243d}, which predicts 9D rotation matrices with post-hoc SVD orthogonalization, against our method, which predicts in $\mathfrak{so}(3)$ and maps to $\mathrm{SO}(3)$ via the exponential map. Fig.~\ref{fig:motivation}~(a) presents a percentile-wise comparison of orthogonality violations across 148K and 123K pose predictions for the baseline and our method, respectively. While both methods operate near float32 numerical precision, our approach achieves consistently lower violations: 5.7\% reduction at median, 11.8\% at P90, 5.4\% at P95, and 2.6\% at P99. The temporal structure shown in Fig.~\ref{fig:motivation}~(b) reveals further distinctions. The baseline exhibits substantial variance across trajectory steps throughout the 50-step trajectory. This instability arises from SVD projection artifacts that vary with the input matrix condition number. In contrast, our method maintains near-constant violations with minimal variance, as the exponential map guarantees that any $\boldsymbol{\omega} \in \mathfrak{so}(3)$ maps to a valid rotation by construction. These results motivate the core design choice of operating directly on the Lie algebra: predictions in $\mathfrak{so}(3)$ achieve both consistently lower constraint violations and stable behavior, properties essential for generating smooth and physically realizable robot manipulation trajectories. The full framework extending this insight to trajectory-level diffusion on $\mathrm{SE}(3)$ is presented in the following section. \vspace{-1em}

\section{Methodology}
\label{sec::methodology}
\begin{figure*}
  \centering
  \includegraphics[width=1\linewidth]{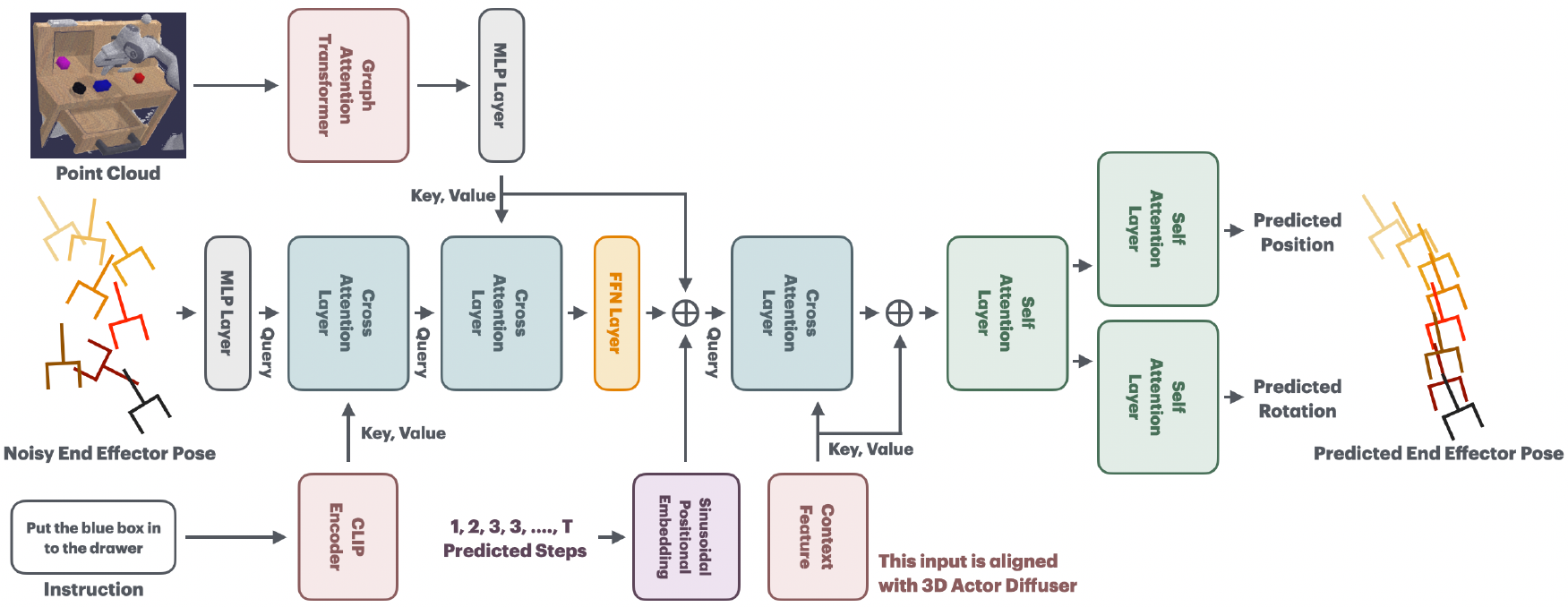}
  \caption{The overall architectural details for the proposed Lie Diffuser Actor.}
  \label{fig:architecture}
  \vspace{-1.5em}
\end{figure*}

This section presents Lie Diffuser Actor, a framework that reformulates diffusion to operate natively on $\mathrm{SE}(3)$. We first develop the theoretical foundation in Section~\ref{subsec:theory}, then describe the neural architecture in Section~\ref{subsec:architecture}. 
% \vspace{-0.5em}

\subsection{Intrinsic Diffusion on the SE(3) Manifold}
\label{subsec:theory}

\subsubsection{Diffusion via Left-Invariant SDEs}
\label{subsubsec:left_invariant_sde}

Standard diffusion defines forward noising as $\mathbf{x}_t = \mathbf{x}_0 + \sigma_t \boldsymbol{\epsilon}$ with $\boldsymbol{\epsilon} \sim \mathcal{N}(\mathbf{0}, \mathbf{I})$. This formulation is invalid for $\mathrm{SE}(3)$ as vector addition is undefined on manifolds. We instead define the forward process as a stochastic differential equation on the Lie group. For pose $g_t \in \mathrm{SE}(3)$ evolving over $t \in [0, T]$, the forward diffusion takes the Stratonovich form as follows:
% \vspace{-0.5em}
\begin{equation}
\label{eq:forward_sde}
\mathrm{d}g_t = g_t \cdot \left( \sigma_t \sum_{i=1}^{6} E_i \circ \mathrm{d}W_t^i \right),
\end{equation}
% \vspace{-0.5em}
where $\{E_i\}_{i=1}^{6}$ form an orthonormal basis for $\mathfrak{se}(3)$, $\{W_t^i\}$ are independent Wiener processes, and $\sigma_t$ is a time-dependent noise scale. In discrete form, this reduces to $g_t = g_0 \cdot \exp(\sigma_t \boldsymbol{\xi})$ with $\boldsymbol{\xi} \sim \mathcal{N}(\mathbf{0}, \mathbf{I}_6)$, where $\mathfrak{se}(3) \to \mathrm{SE}(3)$ denotes the Lie group exponential map (Appendix~\ref{app:proof_manifold}).

This construction addresses a fundamental issue with Euclidean formulations: adding Gaussian noise to a rotation matrix yields a non-orthogonal matrix with probability one. Such manifold drift forces networks to learn correction functions via SVD or quaternion normalization. The left-invariant formulation eliminates this problem entirely.

\begin{proposition}[Elimination of Manifold Drift]
\label{prop:manifold}
The left-invariant SDE in Eq.~\eqref{eq:forward_sde} ensures $g_t \in \mathrm{SE}(3)$ for all $t \in [0, T]$ almost surely.
\end{proposition}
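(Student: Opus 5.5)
We need to show that the Stratonovich SDE~\eqref{eq:forward_sde} keeps $g_t$ on $\mathrm{SE}(3)$. The natural route is the standard fact that a Stratonovich SDE driven by vector fields tangent to a closed embedded submanifold stays on that submanifold. Stratonovich calculus obeys the ordinary chain rule, so the tangency argument goes through exactly as in the deterministic case.

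We regard $g_t$ as a process in the ambient space $\mathbb{R}^{4\times 4}$. For each $i$ we define the vector field $V_i(g) = g E_i$. Then Eq.~\eqref{eq:forward_sde} reads $\mathrm{d}g_t = \sum_i \sigma_t V_i(g_t)\circ \mathrm{d}W^i_t$. Each $V_i$ is linear in $g$, hence globally Lipschitz, so there is a unique strong solution defined for all $t\in[0,T]$ with no explosion. Moreover, for $g\in\mathrm{SE}(3)$ the vector $gE_i$ is the left translate of a Lie algebra element, so it lies in $T_g\mathrm{SE}(3)$. The fields are therefore tangent to $\mathrm{SE}(3)$.

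To make invariance concrete, we use the defining functions of $\mathrm{SE}(3)$ inside the matrix space. These are $\Phi(g) = R^\top R - I$, the last row minus $(\mathbf{0}^\top,1)$, and $\det R$. By the Stratonovich chain rule, writing $g_t=\begin{pmatrix}R_t & \mathbf{t}_t\\ r_t^\top & s_t\end{pmatrix}$, the last row satisfies $\mathrm{d}(r_t^\top, s_t) = (r_t^\top, s_t)\,\sigma_t\sum_i E_i\circ \mathrm{d}W^i_t$. The last row of every $E_i$ is zero, so any row vector $(r^\top,s)$ with $r=\mathbf{0}$ stays constant, and $(\mathbf{0}^\top,1)$ is preserved. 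For the rotation block, $\mathrm{d}R_t = R_t\,\sigma_t\sum_i [\omega_i]_\times\circ \mathrm{d}W^i_t$, where $[\omega_i]_\times$ is the skew part of $E_i$. Then
\[
\mathrm{d}(R_t^\top R_t) = \sigma_t\sum_i\big([\omega_i]_\times^\top R_t^\top R_t + R_t^\top R_t[\omega_i]_\times\big)\circ \mathrm{d}W^i_t .
\]
This is a linear Stratonovich SDE for $M_t=R_t^\top R_t$. The constant $M\equiv I$ solves it because $[\omega]_\times^\top+[\omega]_\times=0$. Pathwise uniqueness for linear SDEs with $M_0=I$ then gives $M_t=I$ for all $t$ almost surely. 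Finally, $\det R_t$ is continuous in $t$, takes values in $\{\pm1\}$, and starts at $1$, so it stays $1$.

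A cleaner alternative is to write the solution as $g_t = g_0 \cdot Y_t$, where $Y_t$ solves the same SDE started at $I$. Left-invariance makes this valid. One then invokes the general theorem that Stratonovich SDEs on a manifold $M$ with vector fields in $\mathfrak{X}(M)$ have solutions remaining in $M$ up to explosion, together with non-explosion from the Lipschitz bound. This is also where the discrete form $g_0\exp(\sigma_t\boldsymbol{\xi})$ enters. Each factor $\exp(\cdot)$ lies in $\mathrm{SE}(3)$ by Eq.~\eqref{eq:exp_map} and the Rodrigues formula, and the group is closed under multiplication, so the discretized chain stays on $\mathrm{SE}(3)$ by induction.

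The main obstacle is the justification of the Stratonovich step. In It\^o form the equation acquires a drift $\tfrac12\sigma_t^2\sum_i gE_i^2$. A naive It\^o computation of $R^\top R$ must show that the correction terms cancel exactly, namely
\[
\sum_i\big([\omega_i]_\times^2 + ([\omega_i]_\times^\top)^2 + 2[\omega_i]_\times^\top[\omega_i]_\times\big)=0 .
\]
I would handle this by staying in Stratonovich form throughout, where the ordinary chain rule applies, and by relying on uniqueness for the linear matrix SDE. Checking the sign convention so that $E_i$ acts on the right is the only bookkeeping to watch.
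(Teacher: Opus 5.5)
Your proof is correct, but it reaches the conclusion by a different mechanism than the paper. After a digression on the Euclidean case, the paper's argument is purely algebraic. It shows $e^{[\omega]_\times}\in\mathrm{SO}(3)$ from $(e^{A})^\top=e^{-A}$ and $\det e^{A}=e^{\mathrm{tr}A}=1$. It then writes a step as $g_{t+\mathrm{d}t}=g_t\exp(\xi\,\mathrm{d}t)$, uses closure of $\mathrm{SE}(3)$ under multiplication, and concludes ``by induction over all infinitesimal time steps'' together with continuity of $\exp$.

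That is really a statement about the product-of-exponentials discretization. Since this is what the sampler actually runs, it certifies the algorithm's iterates directly. As a proof about the continuous-time SDE, however, it leaves the passage to the limit unjustified. That step needs convergence of the geometric Euler scheme to the Stratonovich solution and closedness of $\mathrm{SE}(3)$ in $\mathbb{R}^{4\times 4}$, and continuity of $\exp$ provides neither.

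You instead work with the SDE itself in the ambient matrix space. You apply the Stratonovich chain rule to the defining functions and use pathwise uniqueness for the resulting closed linear equations, so no limit argument is needed. Your route also makes visible why the Stratonovich reading of Eq.~\eqref{eq:forward_sde} matters. The It\^o correction cancels only because $[\omega]_\times^\top=-[\omega]_\times$. An It\^o equation without that correction would give $R_t^\top R_t$ the drift $-\sigma_t^2\sum_i[\omega_i]_\times^2=2\sigma_t^2 I$ at $R_t^\top R_t=I$, and the process would leave $\mathrm{SO}(3)$. Your closing remark on the discretized chain is essentially the paper's entire proof.

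Two details should be stated explicitly. First, the last-row step, like the $R_t^\top R_t$ step, rests on uniqueness for the closed linear equation satisfied by the last row, not merely on the constant being a solution. Second, you are assuming $g_0\in\mathrm{SE}(3)$ and $\sigma_t$ deterministic with $\int_0^T\sigma_t^2\,\mathrm{d}t<\infty$.
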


The proof, based on Lie group closure under multiplication, is provided in Appendix~\ref{app:proof_manifold}. This property allows the network to focus on learning manipulation semantics rather than geometric corrections.

\subsubsection{Reverse Process and Equivariance}

For trajectory generation, we require the reverse-time dynamics. Following time-reversal theory on Lie groups~\citep{hsu2002stochastic}, the reverse process satisfies the following equation:
\begin{equation}
\label{eq:reverse_sde}
\mathrm{d}g_t = g_t \cdot \left( \sigma_t^2 s_\theta(g_t, t) \, \mathrm{d}t + \sigma_t \, \mathrm{d}\bar{\mathbf{B}}_t \right),
\end{equation}
where $s_\theta : \mathrm{SE}(3) \times [0, T] \to \mathfrak{se}(3)$ is the learned score function approximating $\nabla_{\mathfrak{se}(3)} \log p_t$, and $\bar{\mathbf{B}}_t$ is reverse-time Brownian motion. For practical implementation, we discretize via the exponential map retraction: $g_{t-\Delta t} = g_t \cdot \exp( \sigma_t^2 s_\theta(g_t, t) \Delta t + \sigma_t \sqrt{\Delta t} \, \boldsymbol{\zeta})$ with $\boldsymbol{\zeta} \sim \mathcal{N}(\mathbf{0}, \mathbf{I}_6)$.

Beyond preserving geometric validity, the left-invariant structure induces equivariance under coordinate transformations. This property is essential for robotic manipulation, where the choice of reference frame is often arbitrary and may vary across deployments.

\begin{theorem}[Left-Invariant Equivariance]
\label{thm:equivariance}
For the optimal score function of Eq.~\eqref{eq:forward_sde} and any rigid transformation $h \in \mathrm{SE}(3)$,
\begin{equation}
\label{eq:equivariance}
s_\theta(h \cdot g, t) = \mathrm{Ad}_h(s_\theta(g, t)),
\end{equation}
where the adjoint representation acts as $\mathrm{Ad}_{(R,\mathbf{p})}(\boldsymbol{\omega}, \mathbf{v}) = (R\boldsymbol{\omega}, R\mathbf{v} + [\mathbf{p}]_\times R\boldsymbol{\omega})$.
\end{theorem}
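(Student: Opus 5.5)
The plan is to obtain the identity from two facts: the forward kernel is invariant under left translation, and the score is a derivative of $\log p_t$ along one-parameter subgroups. Throughout, ``optimal score'' means the minimizer of the denoising score-matching loss, namely $s^\star(g,t)=\nabla_{\mathfrak{se}(3)}\log p_t(g)$. The data distribution is understood as conditioned on the workspace. Transforming the workspace by $h$ replaces $p_0$ by the pushforward $h_\ast p_0$, so the claim compares the scores $s^\star_h$ and $s^\star$ of the two transformed problems. This interpretation has to be fixed first, because for a single fixed $p_0$ the identity cannot hold for arbitrary $h$.

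\textbf{Step 1 (covariance of the marginals).} From the discrete form $g_t=g_0\exp(\sigma_t\boldsymbol{\xi})$ of Eq.~\eqref{eq:forward_sde}, if $g_0\sim h_\ast p_0$ then $h^{-1}g_0\sim p_0$. Moreover $h^{-1}g_t=(h^{-1}g_0)\exp(\sigma_t\boldsymbol{\xi})$, because left multiplication commutes with right multiplication. Equivalently, the transition kernel satisfies $k_t(hg_0,hg)=k_t(g_0,g)$. Since Haar measure on $\mathrm{SE}(3)$ is bi-invariant (the group is unimodular), integrating against $p_0$ gives the marginal relation $p^h_t(hg)=p_t(g)$ for all $t$. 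Proposition~\ref{prop:manifold} guarantees that every $g_t$ stays in $\mathrm{SE}(3)$, so these densities are well defined with respect to Haar measure.

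\textbf{Step 2 (differentiate along generators).} The score is defined as a Lie-algebra-valued gradient. I will use the right-trivialized (spatial, world-frame) directional derivative
$$\langle s(g),X\rangle=\tfrac{d}{d\varepsilon}\big|_{0}\log p_t(\exp(\varepsilon X)g),$$
since this is the frame in which a workspace transformation acts. By Step 1,
$$\log p^h_t(\exp(\varepsilon X)hg)=\log p_t\big(h^{-1}\exp(\varepsilon X)h\,g\big)=\log p_t\big(\exp(\varepsilon\,\mathrm{Ad}_{h^{-1}}X)\,g\big).$$
Differentiating at $\varepsilon=0$ gives
$$\langle s^\star_h(hg),X\rangle=\langle s^\star(g),\mathrm{Ad}_{h^{-1}}X\rangle\quad\text{for all }X\in\mathfrak{se}(3).$$
The same computation with the body-frame (left-trivialized) derivative shows that the body-frame score is simply invariant. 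I will record this as a consistency check, and note that converting body to spatial coordinates at $g$ is exactly $\mathrm{Ad}_g$. This reproduces the $\mathrm{Ad}_h$ factor through the identity $\mathrm{Ad}_{hg}=\mathrm{Ad}_h\mathrm{Ad}_g$.

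\textbf{Step 3 (identify and expand the adjoint).} I will read off $s^\star_h(hg)$ from the pairing identity of Step 2 and evaluate $\mathrm{Ad}_{(R,\mathbf{p})}$ in the $(\boldsymbol{\omega},\mathbf{v})$ coordinates. The formula in the statement follows from $h\,\hat{\boldsymbol{\xi}}\,h^{-1}$, a routine $4\times4$ matrix calculation using $R[\boldsymbol{\omega}]_\times R^\top=[R\boldsymbol{\omega}]_\times$.

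\textbf{Main obstacle.} The delicate point is the identification of $\mathfrak{se}(3)$ with its dual. With the Euclidean inner product on the basis $\{E_i\}$, the pairing identity yields $\mathrm{Ad}_{h^{-1}}^{\top}$ (a coadjoint, wrench-like action). This agrees with $\mathrm{Ad}_h$ on the rotational block, but it differs in the coupling term whenever $\mathbf{p}\neq 0$, because $\mathrm{Ad}$ is not orthogonal on $\mathfrak{se}(3)$. I would first try to realize the claimed law directly by combining the invariance of the body-frame score with the frame change $\mathrm{Ad}_g$, which gives $s_{\mathrm{spatial}}=\mathrm{Ad}_g s_{\mathrm{body}}$ and hence exactly $\mathrm{Ad}_h$ under $g\mapsto hg$. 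If that route does not work cleanly, I would state explicitly the convention under which the score is regarded as a twist rather than a covector. A second fallback is to restrict to pure rotations $h=(R,\mathbf{0})$, where both conventions coincide. Either way, the convention must be stated alongside the theorem, since the stated formula depends on it.
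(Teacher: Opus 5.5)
Your proof is correct, and it takes a different and sounder route than the paper. The one step you share is left-invariance of the forward kernel, $p_t(hg\mid hg_0)=p_t(g\mid g_0)$, and it is also the only step of the paper's argument that holds as written. The paper compares scores conditioned on $hg_0$ versus $g_0$, so it implicitly uses your pushforward reading of the problem.

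From there the paper tries to produce the $\mathrm{Ad}_h$ factor through the noise and through the gradient operator:
\begin{itemize}
\item it rewrites $hg_t=(hg_0)\exp(\sqrt{\beta_t}\,\mathrm{Ad}_h\xi)$ by inserting $h^{-1}h$;
\item it claims $\mathrm{Ad}_h$ is an isometry, so the Gaussian noise is unchanged;
\item it asserts that $(L_h)_*$ acts as $\mathrm{Ad}_h$ in the left trivialization, obtaining $s(hg,t)=\mathrm{Ad}_{h^{-1}}s(g,t)$;
\item it then ``rearranges'' this into $s(hg,t)=\mathrm{Ad}_h s(g,t)$.
\end{itemize}
None of these steps holds. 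Left multiplication needs no conjugation: directly, $hg_t=(hg_0)\exp(\sqrt{\beta_t}\xi)$. $\mathrm{Ad}_h$ is not orthogonal when $\mathbf{p}\neq\mathbf{0}$; for example, $\|\mathrm{Ad}_h(\boldsymbol{\omega},\mathbf{0})\|^2=\|\boldsymbol{\omega}\|^2+\|\mathbf{p}\times R\boldsymbol{\omega}\|^2$. $(L_h)_*$ is the identity in body coordinates. Finally, from $\mathrm{Ad}_h(s(hg,t))=s(g,t)$ the correct rearrangement is $s(hg,t)=\mathrm{Ad}_h^{-1}s(g,t)$, not the stated law.

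Your route supplies exactly what is missing. You state that the data law is pushed forward by $h$; without this, as you note, the identity cannot hold for all $h$. You show the body-frame score is simply invariant. You then obtain $\mathrm{Ad}_h$ only through the frame change $\mathrm{Ad}_{hg}=\mathrm{Ad}_h\mathrm{Ad}_g$. Your remark that a Euclidean pairing gives the coadjoint $\mathrm{Ad}_{h^{-1}}^{\top}$ instead pinpoints why the convention has to be part of the theorem.

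When you write it up, commit to the twist route in your last paragraph rather than Step 3 as phrased. Reading $s^\star_h(hg)$ off the pairing identity of Step 2 gives $\mathrm{Ad}_{h^{-1}}^{\top}s^\star(g)$, which is not the claimed formula. Instead:
\begin{itemize}
\item define the score as the gradient for the left-invariant metric, so it is a tangent vector;
\item report it in world coordinates as $\mathrm{Ad}_g\, s_{\mathrm{body}}(g)$;
\item state this convention in the theorem.
\end{itemize}

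Two further remarks are worth making explicit. First, the paper's sampler $g_{t-\Delta t}=g_t\exp(\cdots)$ consumes body-frame twists. So for the quantity the network actually outputs, the correct law is invariance, and $\mathrm{Ad}_h$ appears only after converting to the world frame. Second, the minimizer of the paper's loss is $\mathbb{E}[\boldsymbol{\xi}\mid g_t]$, not literally $\nabla\log p_t$. It obeys the same body-frame invariance, because the transformed problem uses the same $\boldsymbol{\xi}$ with $g_t\mapsto hg_t$, so your argument covers that reading of ``optimal score'' too.
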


Theorem~\ref{thm:equivariance} (proved in Appendix~\ref{app:proof_equivariance}) formalizes this result: the score function operates in a body-fixed reference frame, and the learned policy therefore depends on the task geometry rather than on its coordinate representation. In Section~\ref{sec::experiments}, the consequences of this equivariance emerge in the SO(3) constraint violations during denoising (Section~\ref{subsec:manifold_comparison}), the coordinate-frame-stable refinement of target poses (Section~\ref{subsec:look_ahead_consistency}), and the zero-shot generalization across environment configurations in the CALVIN ABC $\rightarrow$ D protocol (Section~\ref{subsec:calvin}).
\vspace{-0.5em}

\subsubsection{Kinematic Optimality}

The final theoretical contribution concerns trajectory quality. The deterministic probability flow ODE corresponding to Eq.~\eqref{eq:reverse_sde} is $\frac{\mathrm{d}g_t}{\mathrm{d}t} = g_t \cdot \sigma_t^2 s_\theta(g_t, t)$, which connects our formulation to Riemannian geometry. 
\vspace{-0.5em}

\begin{proposition}[Geodesic Trajectories]
\label{prop:geodesic}
When the score function is constant along the trajectory, i.e., $s_\theta(g_t, t) = \boldsymbol{\xi}^*$ for fixed $\boldsymbol{\xi}^* \in \mathfrak{se}(3)$, the probability flow generates geodesics on $\mathrm{SE}(3)$ under the bi-invariant metric. These geodesics correspond to screw motions with constant angular and linear velocities.
\end{proposition}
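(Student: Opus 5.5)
The plan is to integrate the probability flow ODE explicitly under the hypothesis $s_\theta(g_t,t)=\boldsymbol{\xi}^*$, and then check that the resulting curve is a geodesic using the standard Levi-Civita connection formula for bi-invariant metrics on Lie groups. With constant score, the ODE becomes $\dot g_t = g_t\,\sigma_t^2\boldsymbol{\xi}^*$. This is a left-invariant, time-inhomogeneous linear ODE in the matrix group. I would reparametrize time by $\tau(t)=\int_{t_0}^{t}\sigma_s^2\,\mathrm{d}s$, which is monotone since $\sigma_s^2>0$. The ODE then reads $\frac{\mathrm{d}g}{\mathrm{d}\tau}=g\,\boldsymbol{\xi}^*$, and by uniqueness of solutions it has the solution $g(\tau)=g_{t_0}\exp(\tau\boldsymbol{\xi}^*)$, with the exponential map of Eq.~\eqref{eq:exp_map}. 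So the probability flow traces a left-translated one-parameter subgroup, up to a monotone reparametrization. Because the body velocity $g^{-1}\dot g=\boldsymbol{\xi}^*=(\boldsymbol{\omega}^*,\mathbf{v}^*)$ is constant, Chasles' theorem and the closed form in Eq.~\eqref{eq:exp_map} identify the curve as a screw motion with constant angular and linear velocity. This settles the second sentence of the statement.

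For the geodesic claim, let $\langle\cdot,\cdot\rangle$ be an $\mathrm{Ad}$-invariant inner product on $\mathfrak{se}(3)$, extended to a bi-invariant metric. For left-invariant fields $X,Y$, the Koszul formula gives $\nabla_X Y=\tfrac12[X,Y]$. Hence $\nabla_X X=0$, so integral curves of left-invariant fields, namely the curves $g_{t_0}\exp(\tau\boldsymbol{\xi}^*)$, are geodesics with affine parameter $\tau$. Under the original time $t$ they are therefore geodesics up to reparametrization. Left translation by $g_{t_0}$ is an isometry, which handles the nontrivial starting point.

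The main obstacle is that $\mathrm{SE}(3)$ admits no positive-definite bi-invariant Riemannian metric. The only $\mathrm{Ad}$-invariant symmetric forms are combinations of the Killing-type form $\boldsymbol{\omega}\cdot\boldsymbol{\omega}'$, which is degenerate, and the Klein form $\boldsymbol{\omega}\cdot\mathbf{v}'+\mathbf{v}\cdot\boldsymbol{\omega}'$, which is indefinite. I would handle this by stating the result for the bi-invariant pseudo-Riemannian metric given by the Klein form, or a nondegenerate combination with the Killing-type form. The computation $\nabla_X X=\tfrac12[X,X]=0$ goes through verbatim in that setting, because the Koszul derivation uses only nondegeneracy and $\mathrm{ad}$-invariance.

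As a complementary check, I would note what happens with the natural left-invariant Riemannian metric $\|\boldsymbol{\omega}\|^2+\|\mathbf{v}\|^2$. The Euler--Arnold equation $\dot{\boldsymbol{\xi}}=\mathrm{ad}^*_{\boldsymbol{\xi}}\boldsymbol{\xi}$ shows that a constant $\boldsymbol{\xi}^*$ is a geodesic exactly when $\mathrm{ad}^*_{\boldsymbol{\xi}^*}\boldsymbol{\xi}^*=0$. I would verify this condition directly from the bracket of $\mathfrak{se}(3)$. This makes precise in which metric sense the statement holds and when it extends to the Riemannian case.
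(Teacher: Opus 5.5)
Your proof is correct and follows essentially the paper's route: reduce the probability flow to constant body velocity, integrate to the left-translated one-parameter subgroup $g_0\exp(\tau\boldsymbol{\xi}^*)$, use $\nabla_X Y=\tfrac12[X,Y]$ for a bi-invariant metric so that $\nabla_{\dot g}\dot g=\tfrac12\,g[\boldsymbol{\xi}^*,\boldsymbol{\xi}^*]=0$, and read off the screw motion. Your refinements tighten the argument without changing its structure: the exact reparametrization $\tau=\int\sigma_s^2\,\mathrm{d}s$ replaces the paper's assumption that the schedule is roughly constant, and reading ``bi-invariant metric'' as the indefinite Klein-form metric is actually necessary, because the paper tacitly assumes a positive-definite bi-invariant metric that $\mathrm{SE}(3)$ does not admit (your Euler--Arnold check will return the condition $\boldsymbol{\omega}^*\times\mathbf{v}^*=\mathbf{0}$, so generic screw motions are not geodesics of the left-invariant metric $\|\boldsymbol{\omega}\|^2+\|\mathbf{v}\|^2$).
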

% \vspace{-0.5em}

The proof is provided in Appendix~\ref{app:proof_geodesic}. In practice, the learned score varies along trajectories, but the intrinsic formulation biases toward geodesic-like behavior, resulting in smoother motion with reduced angular jerk compared to Euclidean baselines. 
% \vspace{-0.5em}

\subsection{Neural Architecture}
\label{subsec:architecture}
Building on the theoretical framework established in Section~\ref{subsec:theory}, we now describe the neural architecture that instantiates intrinsic $\mathrm{SE}(3)$ diffusion for robot manipulation. The design extends the 3D Diffuser Actor~\citep{ke20243d} with principled modifications to the prediction head and training objective that respect the manifold structure of rigid body transformations. Figure~\ref{fig:architecture} illustrates the overall architecture, highlighting the flow from multimodal observations through geometric encoding, iterative denoising, and manifold-aware prediction. 
% \vspace{-0.5em}

\subsubsection{Geometric Context Encoding}

The encoder extracts task-relevant features from multimodal observations while preserving the inherent 3D geometric structure of the scene. RGB-D observations from $K$ cameras positioned around the workspace are back-projected into a unified point cloud representation using camera intrinsics and extrinsics, yielding a dense 3D scene reconstruction. This point cloud is then processed by a graph attention transformer~\cite{veličković2018graph} to obtain spatially-grounded geometric features $\mathbf{F}_{\mathrm{geo}} \in \mathbb{R}^{N \times d}$, where $N$ represents the number of non-empty voxels and $d$ is the feature dimension. The sparse convolution architecture is particularly well-suited for processing point clouds as it operates efficiently on the spatially sparse data while preserving the metric relationships between the 3D locations and capturing both local geometric details and global scene structure.
\vspace{+0.5em}

In parallel, natural language instructions describing the manipulation task are encoded via a pretrained CLIP~\citep{radford2021learning} text encoder, yielding language features $\mathbf{F}_{\mathrm{lang}} \in \mathbb{R}^{L \times d}$ where $L$ is the number of text tokens. The use of a frozen pretrained language model leverages large-scale vision-language pretraining to provide robust semantic understanding without requiring task-specific language supervision. Cross-attention layers then fuse these complementary modalities, allowing the model to ground linguistic concepts in specific spatial locations within the geometric representation. This produces a unified context representation $\mathcal{C}$ that conditions all subsequent denoising steps, ensuring that predicted trajectories are both geometrically feasible and semantically aligned with the instruction.

\begin{table*}[ht]
    \centering
    \small
    % \scriptsize                       % Sets the font size to small
    \renewcommand{\arraystretch}{1.0} % Tighter vertical spacing
    \setlength{\tabcolsep}{10pt}      % Wider horizontal spacing
    \caption{Comparison for 3D Diffuser Actor and our method with various settings on CALVIN benchmark.}
    
    \begin{tabular}{lcccccc}
        \toprule
        \textbf{Method} & \textbf{SR1} & \textbf{SR2} & \textbf{SR3} & \textbf{SR4} & \textbf{SR5} & \textbf{Average Length} \\
        \midrule
        
        % SECTION 1
        \multicolumn{7}{c}{\textbf{ABC $\rightarrow$ D}} \\
        \midrule
        3D Diffuser Actor (600K) & 92.2 & 78.7 & 63.9 & 51.2 & 41.2 & 3.27 \\
        \arrayrulecolor{black!20}\midrule % Light separation line
        \makecell[l]{\textbf{Lie Diffuser Actor (600K)} \textcolor{red}{\textbf{w/o}} GAT Encoder\footnotemark{}} & 89.6 & 78 & 66.6 & 55.7 & 46.9 & 3.368 \\
        \midrule
        \makecell[l]{\textbf{Lie Diffuser Actor (300K)} \textcolor{red}{\textbf{w/o}} Lie Space Diffusion} & 90.2 & 80.3 & 69.6 & 58.5 & 48.8 & 3.474 \\
        \midrule
        \textbf{Lie Diffuser Actor (300K)} & \textbf{93.7} & \textbf{83.4} & \textbf{70.3} & \textbf{57.6} & \textbf{46.2} & \textbf{3.512} \\
        \arrayrulecolor{black}\midrule % Reset color to black for section break
        
        % SECTION 2
        \multicolumn{7}{c}{\textbf{ABCD $\rightarrow$ D}} \\
        \midrule
        3D Diffuser Actor (800K) & 90.3 & 77.3 & 65.8 & 53.8 & 41.6 & 3.288 \\
        \arrayrulecolor{black!20}\midrule
        \makecell[l]{\textbf{Lie Diffuser Actor (300K)} \textcolor{red}{\textbf{w/o}} GAT Encoder} & 90.8 & 77.3 & 66.4 & 57.6 & 48.3 & 3.404 \\
        \midrule
        \makecell[l]{\textbf{Lie Diffuser Actor (400K)} \textcolor{red}{\textbf{w/o}} Lie Space Diffusion} & 91.0 & 76.1 & 63.4 & 51.6 & 41.8 & 3.239 \\
        \midrule
        \textbf{Lie Diffuser Actor (300K)} & \textbf{90.6} & \textbf{80.4} & \textbf{71.1} & \textbf{62.6} & \textbf{53.7} & \textbf{3.584} \\
        \arrayrulecolor{black}\bottomrule
    \end{tabular}
    \label{table:calvin_result}
    \vspace{-0.5em}
\end{table*}
\footnotetext{Retrained at 600K iterations post-submission to match the baseline training budget; the originally reported 400K result was 3.254.}

\subsubsection{Iterative Denoising Transformer}

The denoising backbone is a Transformer~\citep{vaswani2017attention} that refines noisy pose trajectories through repeated application of the learned score function. At each denoising step indexed by diffusion time $t \in \{T, T-1, \ldots, 1\}$, the Transformer receives the current trajectory estimate $\mathbf{g}_t = (g_t^1, \ldots, g_t^H) \in \mathrm{SE}(3)^H$ consisting of $H$ waypoints, along with a sinusoidal time embedding $\tau(t) \in \mathbb{R}^{d_t}$ that encodes the current noise level. Each pose $g_t^h = (R_t^h, \mathbf{t}_t^h)$ is tokenized into a feature vector by concatenating learned positional embeddings for the translation $\mathbf{t}_t^h$, axis-angle representations of the rotation $R_t^h$ processed through dedicated MLPs, and binary embeddings indicating gripper state (open/closed). This tokenization scheme preserves the SE(3) structure while enabling efficient parallel processing.
\vspace{+0.5em}

Self-attention layers within the Transformer model temporal dependencies across the action horizon $H$, allowing the network to capture multi-step coordination and sequential constraints (e.g., the gripper must open before grasping, and the object must be grasped before placement). Cross-attention layers attend to the geometric-linguistic context $\mathcal{C}$ at each denoising step, dynamically injecting task-relevant spatial and semantic information that guides the trajectory refinement. This architecture mirrors the iterative denoising process of diffusion models while respecting the temporal structure of robot trajectories, enabling the model to progressively refine coarse motion plans into precise actions.
% \vspace{+0.5em}

\subsubsection{Tangent Space Prediction Head}

The critical architectural modification distinguishing our approach from standard Euclidean diffusion models is the reformulation of the output head to predict score vectors in the Lie algebra $\mathfrak{se}(3)$ rather than ambient-space noise. For each trajectory waypoint $h \in \{1, \ldots, H\}$, the prediction head outputs a 6-dimensional twist $\boldsymbol{\xi}^h = (\boldsymbol{\omega}^h, \mathbf{v}^h) \in \mathbb{R}^6$ via separate multi-layer perceptrons for the angular velocity component $\boldsymbol{\omega}^h \in \mathbb{R}^3$ and linear velocity component $\mathbf{v}^h \in \mathbb{R}^3$. This factorization reflects the semi-direct product structure of SE(3) and allows the network to learn rotation and translation dynamics with appropriate inductive biases.
% \vspace{+0.5em}

Critically, the predicted twist $\boldsymbol{\xi}^h$ lives in the flat tangent space $\mathfrak{se}(3)$ where Gaussian noise is well-defined, but the denoising update is performed on the manifold via the exponential map: $g_{t-1}^h = g_t^h \cdot \exp(-\beta_t \boldsymbol{\xi}^h)$, where $\beta_t$ is the noise schedule. The exponential map $\exp: \mathfrak{se}(3) \to \mathrm{SE}(3)$ guarantees by construction that each update yields a valid rigid body transformation satisfying all manifold constraints ($R^T R = I$, $\det(R) = 1$), eliminating the need for post-hoc projection or renormalization. This design choice directly implements the manifold-preserving property established in Proposition~\ref{prop:manifold}, ensuring geometric validity throughout the entire sampling process. Gripper actions are predicted via a separate sigmoid classifier head that outputs binary open/close commands, which are synchronized with the corresponding pose waypoints during trajectory execution. \vspace{+0.5em}

\subsubsection{Training Objective}

The score function is trained via denoising score matching in the Lie algebra $\mathfrak{se}(3)$, which provides a principled framework for learning the reverse diffusion process while respecting the manifold structure. Given a ground-truth demonstration trajectory $\mathbf{g}_0 = (g_0^1, \ldots, g_0^H)$ consisting of $H$ waypoints, we construct training samples by first uniformly sampling a diffusion timestep $t \sim \mathcal{U}(0, T)$ and independent Gaussian noise twists $\boldsymbol{\xi}^h \sim \mathcal{N}(\mathbf{0}, \mathbf{I}_6)$ for each waypoint $h$. The noisy poses are then generated via the forward diffusion process using the exponential map: $g_t^h = g_0^h \cdot \exp(\sigma_t \boldsymbol{\xi}^h)$, where $\sigma_t$ is the noise schedule that controls the magnitude of perturbation at timestep $t$. This construction ensures that noise is added intrinsically through the manifold's exponential map rather than through ambient space addition, preserving the SE(3) structure through the forward process.
\vspace{+0.5em}

The objective combines three loss terms that jointly supervise pose prediction, translation accuracy, and gripper state:
\vspace{-1em}
\begin{equation}
\label{eq:loss}
\mathcal{L} = \lambda_{\mathrm{s}} \mathbb{E}_{t, \boldsymbol{\xi}} \left[ \sum_{h=1}^{H} \| s_\theta(g_t^h, t) - \boldsymbol{\xi}^h \|^2 \right] + \lambda_{\mathrm{p}} \mathcal{L}_{\mathrm{pos}} + \lambda_{\mathrm{g}} \mathcal{L}_{\mathrm{grip}},
\end{equation}
% \vspace{+0.5em}

where $s_\theta(g_t^h, t)$ denotes the network's predicted twist for waypoint $h$ at diffusion time $t$. The primary score matching term (weighted by $\lambda_{\mathrm{s}}$) trains the network to predict the noise twist $\boldsymbol{\xi}^h$ added during the forward process, directly implementing denoising score matching adapted to the manifold setting. The auxiliary position loss $\mathcal{L}_{\mathrm{pos}}$ applies standard mean squared error on translation components to provide additional supervision for spatial accuracy, while the gripper loss $\mathcal{L}_{\mathrm{grip}}$ utilizes binary cross-entropy to classify gripper states (i.e., open/closed) at each waypoint. The loss weights $\lambda_{\mathrm{s}}$, $\lambda_{\mathrm{p}}$, and $\lambda_{\mathrm{g}}$ balance these objectives during training. 

\section{Experimental Results}
\label{sec::experiments}
\subsection{Experimental Setups}
\label{subsec:experimental_setup}

\begin{figure}[t]
    \centering
    \begin{minipage}{0.5\textwidth}
        \centering
        % --- Row 1 ---
        \begin{subfigure}[b]{0.49\linewidth}
            \includegraphics[width=\linewidth]{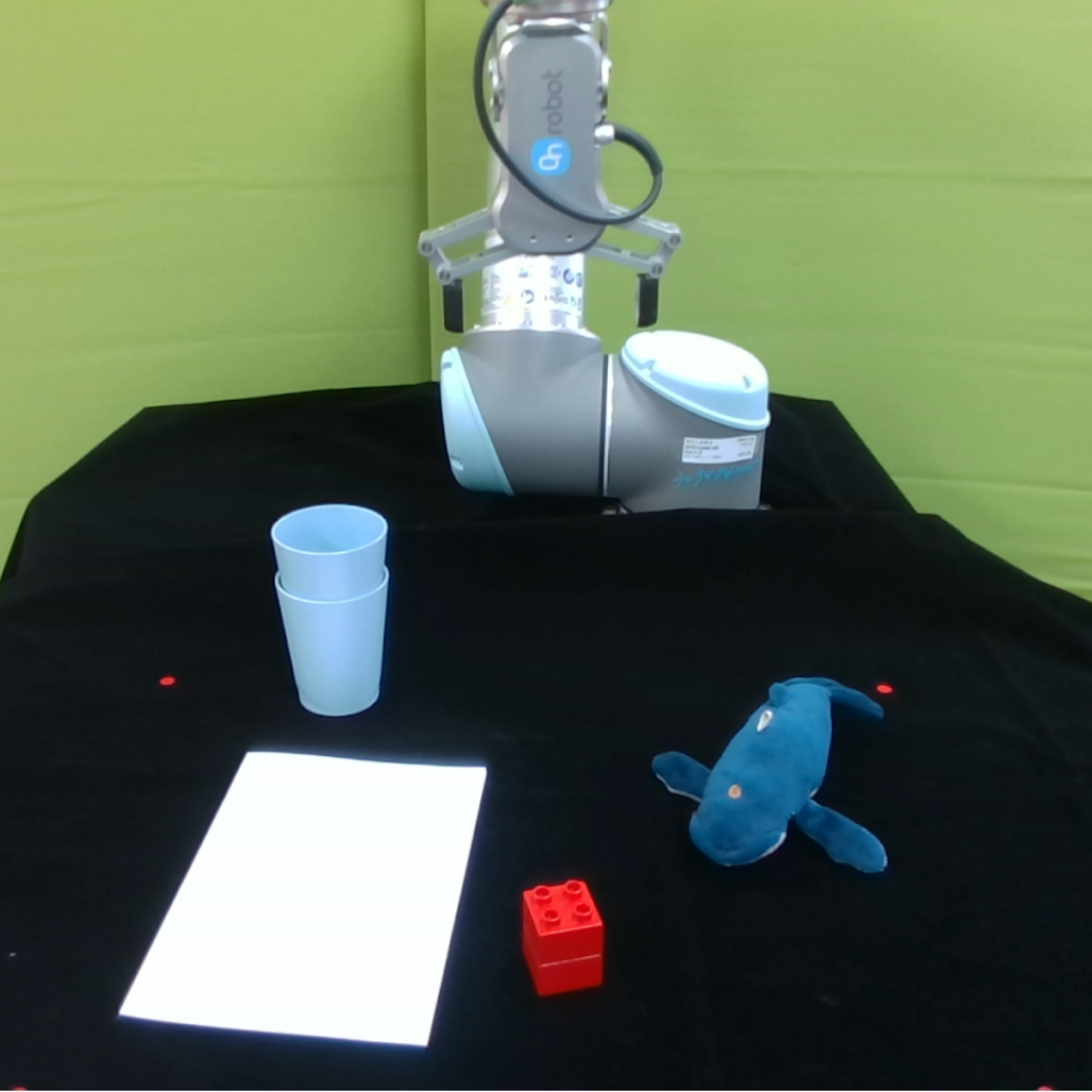}
            \caption{Move Doll Platform}
            \label{fig:move_doll}
        \end{subfigure}
        \begin{subfigure}[b]{0.49\linewidth}
            \includegraphics[width=\linewidth]{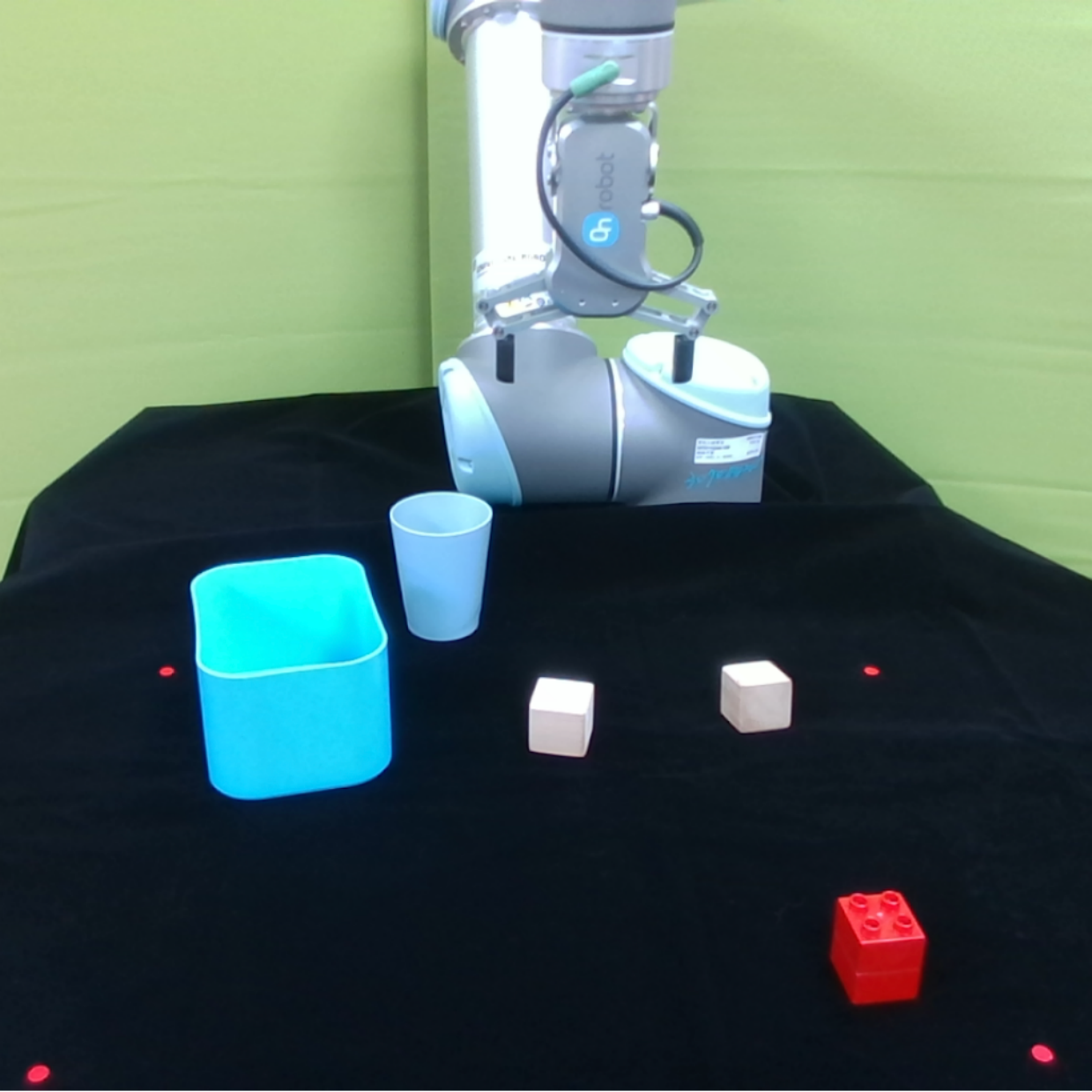}
            \caption{Put Block in Box}
            \label{fig:put_block}
        \end{subfigure}
        
        % --- Row 2 ---
        \begin{subfigure}[b]{0.49\linewidth}
            \includegraphics[width=\linewidth]{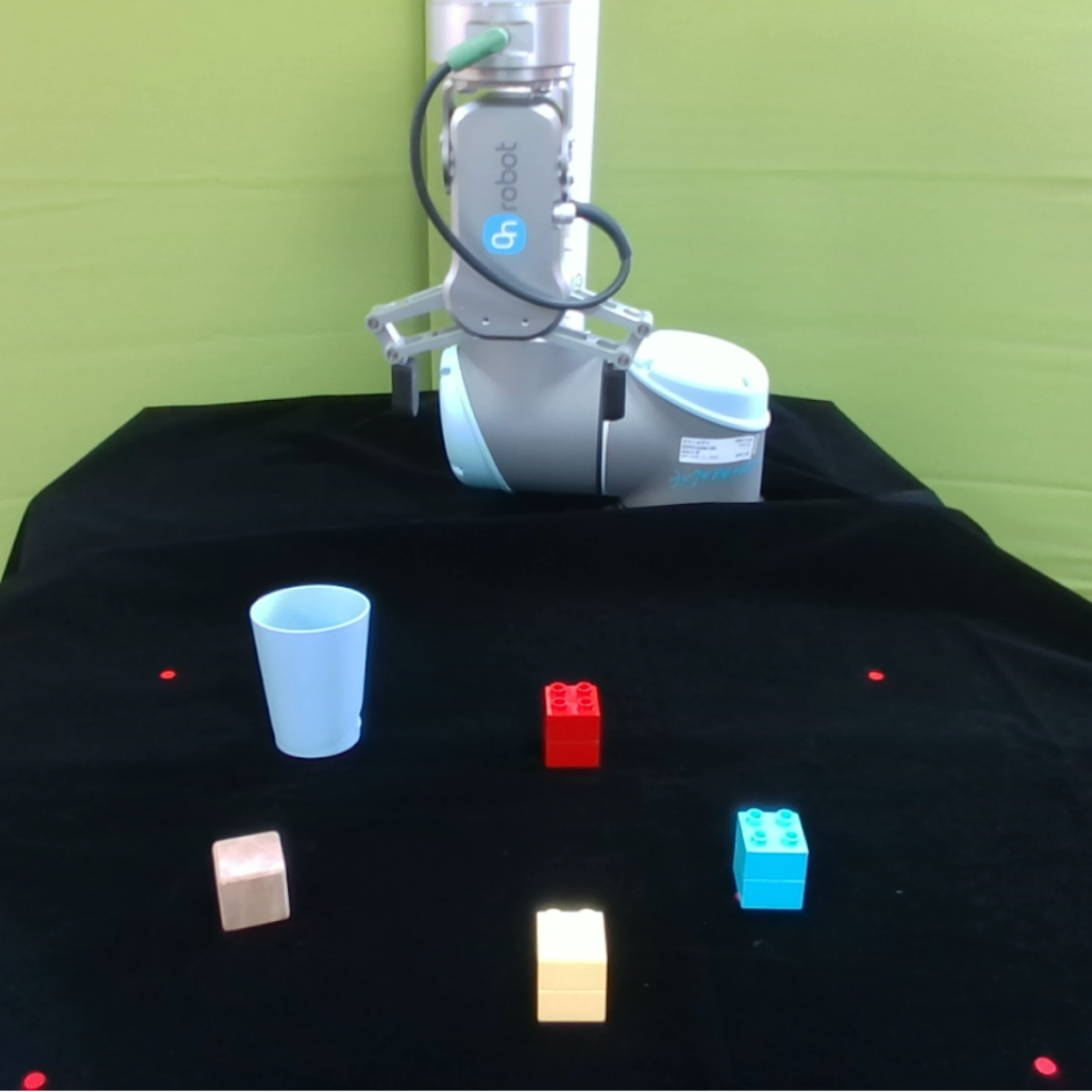}
            \caption{Sort Blocks}
            \label{fig:sort_blocks}
        \end{subfigure}
        \begin{subfigure}[b]{0.49\linewidth}
            \includegraphics[width=\linewidth]{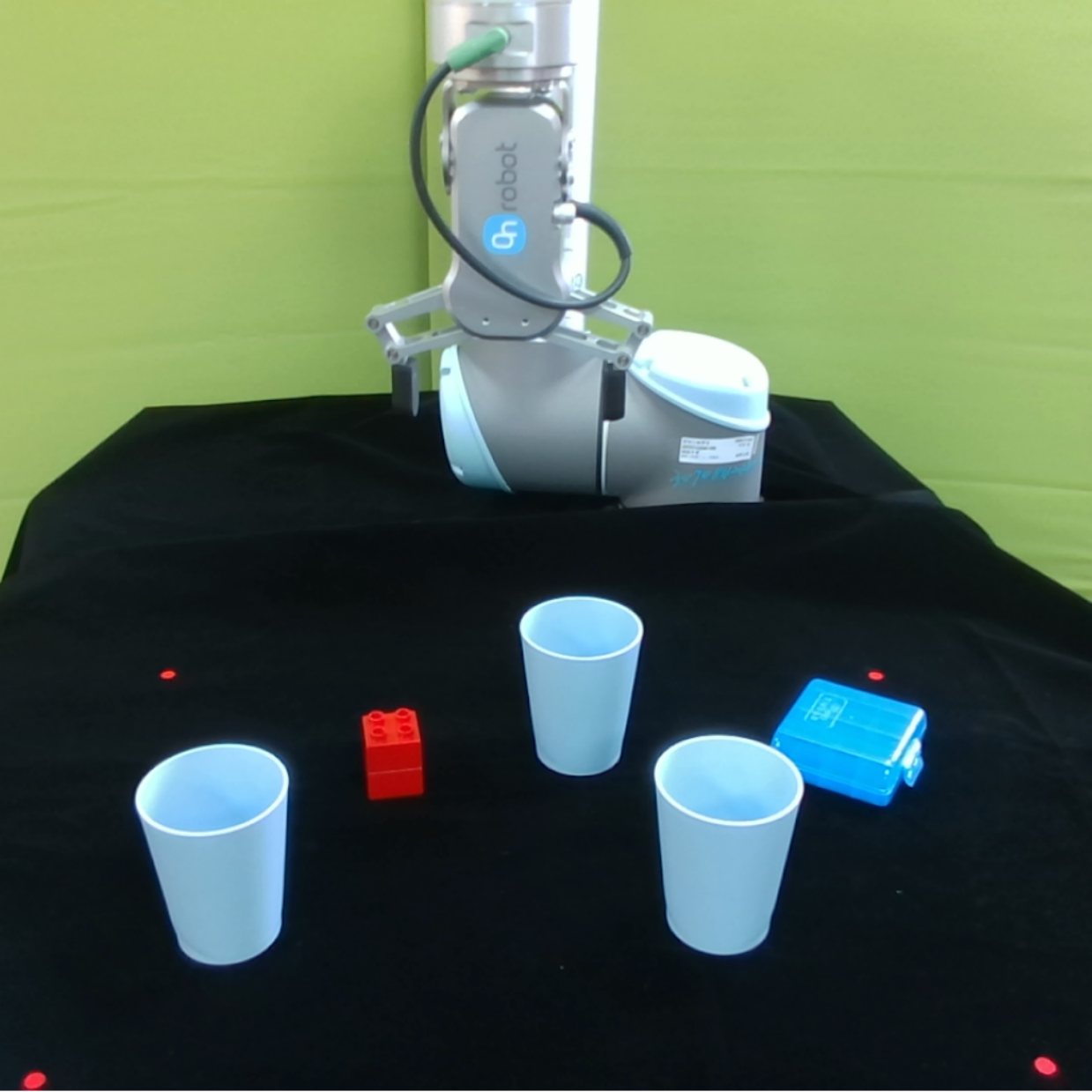}
            \caption{Stack Cups}
            \label{fig:stack_cups}
        \end{subfigure}
        \caption{Real robot evaluation environments.}
        \label{fig:real_robot_envs}
    \end{minipage}
    \hfill
    % RIGHT SIDE: Results Table
    \begin{minipage}{0.5\textwidth}
        \centering
        \captionof{table}{Real robot success rates (\%) over 20 trials per task.}
        \label{tab:real_robot_results}
        
        \vspace{0.2cm}
        % \footnotesize
        \small
        \begin{tabular}{lcccc}
            \toprule
            \textbf{Method} & \textbf{\begin{tabular}[c]{@{}c@{}}Move\\ Doll\end{tabular}} & \textbf{\begin{tabular}[c]{@{}c@{}}Block\\ in Box\end{tabular}} & \textbf{\begin{tabular}[c]{@{}c@{}}Sort\\ Blocks\end{tabular}} & \textbf{\begin{tabular}[c]{@{}c@{}}Stack\\ Cups\end{tabular}} \\
            \midrule
            Baseline & 90 & 80 & 55 & 55 \\
            Lie Diffuser & \textbf{100} & 75 & \textbf{75} & \textbf{60} \\
            \bottomrule
        \end{tabular}
    \end{minipage}
\end{figure}
\begin{figure*}[ht]
    \centering
    \begin{subfigure}[b]{0.245\textwidth}
        \centering
        \includegraphics[width=\linewidth]{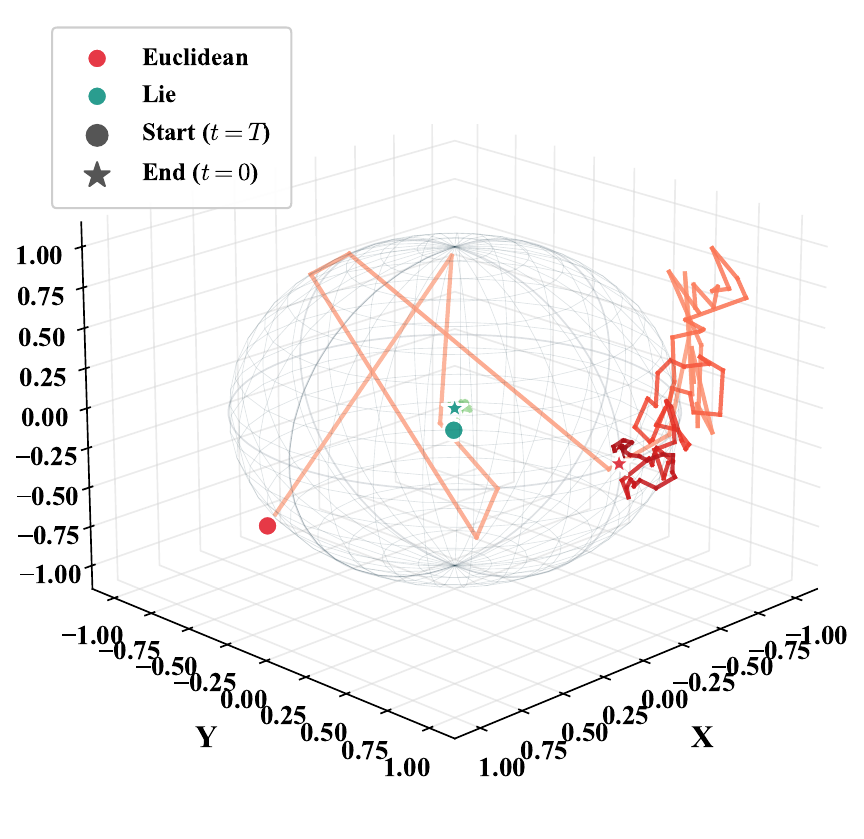}
        \caption{$\mathrm{SO}(3)$ Quaternion Trajectory}
    \end{subfigure}
    \hfill % Adds flexible space between images
    \begin{subfigure}[b]{0.245\textwidth}
        \centering
        \includegraphics[width=\linewidth]{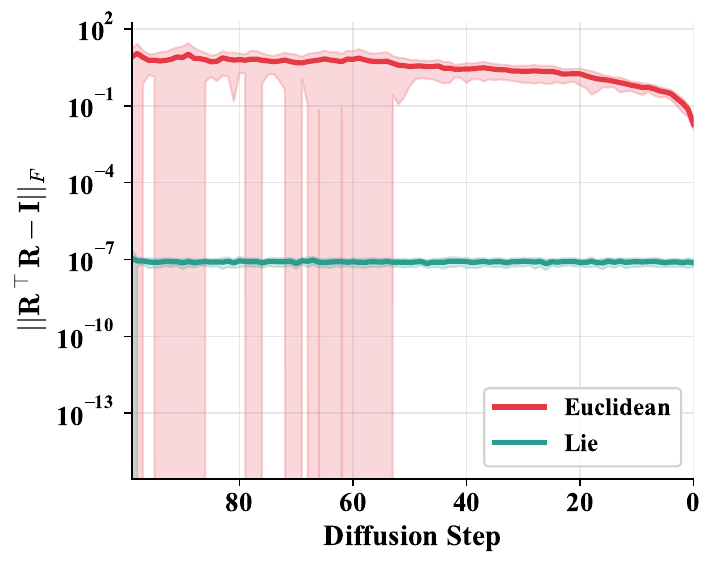}
        \caption{Orthogonality Violation}
    \end{subfigure}
    \hfill
    \begin{subfigure}[b]{0.245\textwidth}
        \centering
        \includegraphics[width=\linewidth]{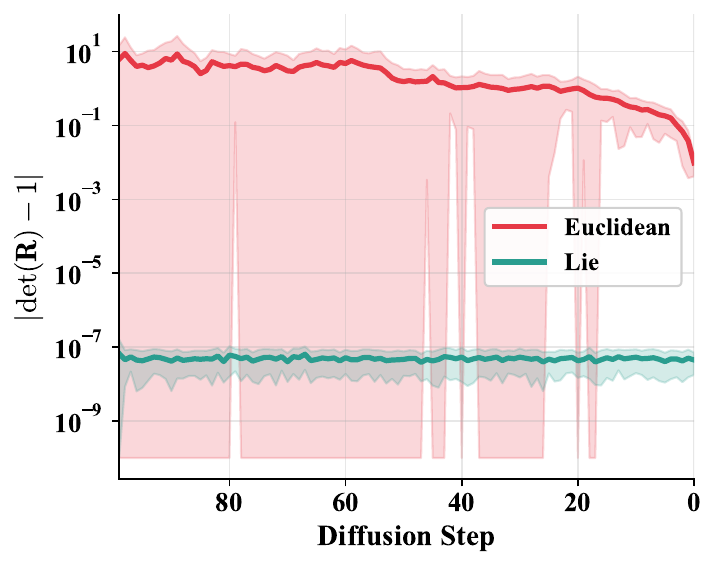}
        \caption{Determinant Violation}
    \end{subfigure}
    \hfill
    \begin{subfigure}[b]{0.245\textwidth}
        \centering
        \includegraphics[width=\linewidth]{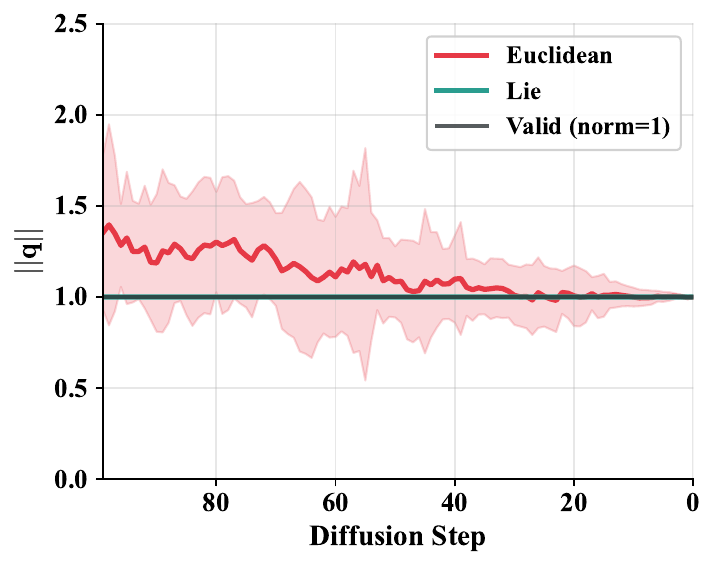}
        \caption{Quaternion Norm}
    \end{subfigure}

    \caption{\textbf{Manifold constraint analysis during reverse diffusion.} (a) Quaternion trajectories projected onto $\mathbb{S}^3$ for Euclidean (red) and Lie (teal) diffusion. Circle markers denote the starting point ($t = T$, pure noise), and star markers denote the final denoised output ($t = 0$). The Euclidean trajectory departs from the unit sphere, while the Lie trajectory remains on the manifold. (b) Orthogonality violation $\|\mathbf{R}^\top\mathbf{R} - \mathbf{I}\|_F$ over diffusion steps. (c) Determinant violation $|\det(\mathbf{R}) - 1|$ over diffusion steps. (d) Quaternion norm $\|\mathbf{q}\|$ over diffusion steps, where valid rotations require $\|\mathbf{q}\| = 1$. Shaded regions indicate $\pm 1$ standard deviation across multiple samples. The proposed Lie Diffuser Actor maintains geometric validity throughout, while Euclidean diffusion violates $\mathrm{SO}(3)$ constraints by $7$+ orders of magnitude.}
    \label{fig:manifold_comparison}
    \vspace{-0.5em}
\end{figure*}

\textbf{Benchmarks.} We evaluate the proposed Lie Diffuser Actor on CALVIN~\cite{mees2022calvin}, a long-horizon language-conditioned benchmark requiring policies to execute up to five consecutive tasks from natural language instructions. The benchmark defines $34$ manipulation skills across four environment variations (A, B, C, D) with distinct object layouts and appearances. Following the standard ABC$\to$D protocol, we train on environments A, B, C and evaluate zero-shot transfer to the held-out environment D. We report the success rates SR$1$--SR$5$ and average task chain length.

\textbf{Baselines.} This work investigates geometric representations for diffusion-based action generation, orthogonal to advances in vision-language foundation models. We therefore compare against methods within the same architectural class rather than large-scale VLA models trained on orders-of-magnitude more data. Our primary baseline is 3D Diffuser Actor~\cite{ke20243d}, the state-of-the-art trajectory-level diffusion policy that shares our 3D point cloud encoder paradigm but employs Euclidean $\mathrm{SE}(3)$ parameterization. To disentangle contributions from intrinsic geometry versus architectural modifications, we conduct four controlled ablations: (i) the original 3D Diffuser Actor with Euclidean diffusion as the baseline, (ii) 3D Diffuser Actor with our Lie group diffusion to isolate the effect of intrinsic geometry, (iii) 3D Diffuser Actor with a Graph Attention Network (GAT) encoder to isolate the effect of architectural improvements, and (iv) Lie Diffuser Actor combining both modifications. This factorial design enables precise attribution of performance gains to each component.

% We compare against 3D Diffuser Actor as our primary baseline, representing current state-of-the-art trajectory-level diffusion policies. To isolate the contribution of our intrinsic formulation, we conduct controlled ablations: (i) 3D Diffuser Actor with original Euclidean diffusion, (ii) 3D Diffuser Actor with Lie group diffusion, (iii) 3D Diffuser Actor with Euclidean diffusion + GAT encoder (improved architecture only), and (iv) Lie Diffuser Actor (ours). This factorial design enables us to separately measure the impact of intrinsic geometry versus architectural improvements.

\subsection{Long-Horizon Manipulation Performance}
\label{subsec:calvin}

Table~\ref{table:calvin_result} presents results on the zero-shot ABC$\to$D transfer task, which assesses the impact of geometric consistency on long-horizon planning. The proposed contributions, intrinsic Lie group diffusion and the geometry-aware GAT encoder, prove individually effective rather than merely complementary. The ablation results demonstrate that augmenting the baseline's 3D relative attention with our GAT encoder improves performance by better capturing local geometric structure in point clouds. Similarly, replacing only the Euclidean diffusion with Lie group diffusion improves performance by eliminating manifold drift and ensuring coordinate-frame equivariance. The full Lie Diffusor Actor methodology achieves the highest success rates and chain lengths, confirming that combining both components is essential for trajectory consistency in long-horizon tasks. 

% As presented in Table~\ref{table:calvin_result}, we first evaluate the zero-shot environment transfer task  ABC$\to$D to assess the impact of geometric consistency on long-horizon planning. Our results demonstrate that the proposed contributions: intrinsic Lie group diffusion and the geometry-aware GAT encoder are not merely complementary but individually effective. We observe that even when ablating one component, the resulting model consistently outperforms the 3D Diffuser Actor baseline. This suggests that both the explicit relational reasoning of the encoder and the manifold-compliant structure of the diffusion process independently mitigate the geometric drift and voxelization artifacts that limit the baseline. However, the synergy of these components is critical; the full method achieves the highest success rates and chain lengths, confirming that unifying a geometry-aware encoder with a manifold-native policy is essential for maintaining trajectory consistency across sequential task transitions.

The lower section of Table~\ref{table:calvin_result} reports the ABCD$\to$D setting, where models train on all four environments to test robustness under increased environment diversity. The Euclidean baseline exhibits training instability and limited generalization on this diverse dataset, whereas our approach demonstrates superior scalability. We attribute this to the baseline's sensitivity to coordinate-frame variations: diverse environments introduce greater variation in object configurations and workspace layouts, which exacerbates the coordinate-frame dependence of Euclidean parameterization. The ablated variants of our methodology still surpass baseline performance even under same training budget despite incorporating only one of the two proposed components, which are consistent with the zero-shot findings. The full LDA methodology yields the best performance, as geometric consistency prevents compounding errors in complex, multi-stage manipulation. 
% \vspace{+0.5em}

% To further validate robustness under increased distribution diversity, we examine the ABCD$\to$D setting in the lower section of Table~\ref{table:calvin_result}, where models are trained on all four environments. While the Euclidean baseline exhibits training instability and limited generalization on this diverse dataset, our approach demonstrates superior scalability. Consistent with the zero-shot results, the ablated variants of our method despite lacking the full architectural unification still surpass the baseline performance. This indicates that the baseline's limitations stem from a fundamental inability to handle the increased geometric diversity of the larger dataset, a challenge that either of our geometric interventions helps to alleviate. Ultimately, the full method combining both components yields the best performance, effectively leveraging the larger scale of data to prevent compounding errors in complex, multi-stage manipulation tasks.

\subsection{Real Robot Validation}
\label{subsec:real_robot}

We deployed both the baseline and our method on a real robot arm across four manipulation tasks that span coarse transport to fine-grained assembly, as shown in Fig.~\ref{fig:real_robot_envs}: (i) \textbf{Move Doll Platform} requires stable $6$-DOF transport to prevent tipping, (ii) \textbf{Put Block in Box} tests tight-tolerance insertion under noisy depth sensing, (iii) \textbf{Sort Blocks} demands accurate spatial discrimination and precise translational placement, and (iv) \textbf{Stack Cups} requires sub-centimeter alignment for stability. Detailed experiment settings are provided in Appendix~\ref{app:real_roboti_experiment}. We collected $50$ demonstrations per task with a two-stage protocol. The first stage uses human-guided kinesthetic teaching to capture natural $6$-DOF trajectories. The second stage performs autonomous replay to record clean visual observations without human occlusions. This protocol ensures high-quality training data free from spurious visual correlations.  \vspace{+0.5em}

The four tasks engage the SE(3) structure of the action space along a spectrum from rotation-dominated to translation-dominated coupling. Move Doll Platform and Put Block in Box require orientation invariance throughout translation, since the end-effector must hold a consistent forward-facing pose during transport that keeps the carried object from drifting away from its target configuration. Stack Cups concentrates the demand at the insertion stage, where a precise tilted approach posture provides the clearance needed for the upper cup to align with the rim of the lower one. Sort Blocks operates under a single target orientation and therefore shifts the burden from rotational range to geometric scene perception, which calls for accurate understanding of the relative configuration of foreground and background blocks. Across this spectrum, the first three tasks couple orientation and translation to varying degrees, and it is precisely in this coupled regime that Euclidean interpolation departs most severely from valid SE(3) trajectories (Proposition~\ref{prop:geodesic}).  \vspace{-0.5em}

As shown in Table~\ref{tab:real_robot_results}, Lie Diffuser Actor achieves superior performance on tasks that demand high geometric fidelity. The method attains perfect success on \textbf{Move Doll Platform} and substantial improvements on \textbf{Sort Blocks} and \textbf{Stack Cups}, where precise orientation control is critical. These gains confirm that Lie group diffusion effectively mitigates geometric drift in real-world settings, where sensor noise and actuation errors compound over trajectory execution. For simpler insertion tasks such as \textbf{Put Block in Box}, LDA remains comparable to the baseline, where unconstrained Euclidean exploration may offer a slight advantage. 

\subsection{Diffusion Manifold Comparison}
\label{subsec:manifold_comparison}

Euclidean diffusion models fail to respect the geometric structure of $\mathrm{SO}(3)$ when generating rotation actions. We empirically demonstrate this limitation by visualizing denoising trajectories and quantifying manifold constraint violations at each timestep. For both the Euclidean baseline and our Lie diffuser, we record intermediate rotation predictions during the reverse diffusion process (from $t=T$ to $t=0$). The Euclidean model outputs are saved \emph{before} any post-hoc Gram-Schmidt orthogonalization, while the Lie model outputs are obtained via the exponential map. \vspace{+0.5em}

% A fundamental limitation of Euclidean diffusion models when applied to rotation actions lies in their inability to respect the underlying geometric structure of $\mathrm{SO}(3)$. To empirically demonstrate this, we visualize the denoising trajectories during reverse diffusion and quantify the manifold constraint violations at each timestep. We record the intermediate rotation predictions during the reverse diffusion process (from $t=T$ to $t=0$) for both the Euclidean baseline and our Lie diffuser. For the Euclidean model, we save the raw network outputs \emph{before} any post-hoc normalization (Gram-Schmidt orthogonalization), while for the Lie model, we record the rotations obtained via the exponential map.

\textbf{Qualitative Analysis.} For visualization, we convert rotation matrices to unit quaternions and project them onto the unit sphere $\mathbb{S}^3$. As shown in Fig.~\ref{fig:manifold_comparison}~(a), the Euclidean diffusion trajectory (red) exhibits erratic behavior and frequently departs from the unit sphere surface, particularly during the early noisy stages ($t \approx T$). This manifests as the trajectory cutting \emph{through} the interior of the sphere rather than traversing along its surface. In contrast, the Lie Diffuser Actor's trajectory (teal) remains precisely on the manifold throughout the entire denoising process and follows geodesic paths that faithfully respect the curved geometry of rotation space.

% \cref{fig:manifold_comparison}(a) visualizes the quaternion trajectories projected onto $\mathbb{S}^3$. The Euclidean diffusion trajectory (red) exhibits erratic behavior, frequently departing from the unit sphere surface---particularly during the early noisy stages ($t \approx T$). This manifests as the trajectory cutting \emph{through} the interior of the sphere rather than traversing along its surface. In contrast, the Lie diffuser trajectory (teal) remains precisely on the manifold throughout the entire denoising process, following geodesic paths that respect the curved geometry of rotation space.

\textbf{Quantitative Analysis.} We measure two key $\mathrm{SO}(3)$ constraint violations: (i) \emph{Orthogonality Error} $\|\mathbf{R}^\top \mathbf{R} - \mathbf{I}\|_F$, and (ii) \emph{Determinant Error} $|\det(\mathbf{R}) - 1|$. As shown in Figs.~\ref{fig:manifold_comparison}~(b-c), the Euclidean baseline exhibits substantial violations throughout the diffusion process, with errors reaching $\mathcal{O}(10^0)$, while the Lie diffuser maintains errors at floating-point precision ($\sim 10^{-7}$). We also track the quaternion norm $\|\mathbf{q}\|$ during denoising in Fig.~\ref{fig:manifold_comparison}(d). Valid unit quaternions require $\|\mathbf{q}\|=1$, but Euclidean outputs deviate significantly with values ranging from $0.5$ to $2.0$. The proposed Lie Diffuser Actor maintains unit norm throughout and eliminates the requirement for post-hoc normalization. % \vspace{+0.5em}

% We measure two key $\mathrm{SO}(3)$ constraint violations: (1) \emph{Orthogonality Error}: $\|\mathbf{R}^\top \mathbf{R} - \mathbf{I}\|_F$, and (2) \emph{Determinant Error}: $|\det(\mathbf{R}) - 1|$. As shown in \cref{fig:manifold_comparison}(b-c), the Euclidean baseline exhibits substantial violations throughout the diffusion process, with errors reaching $\mathcal{O}(10^0)$. The Lie diffuser maintains errors at floating-point precision ($\sim 10^{-7}$). Additionally, \cref{fig:manifold_comparison}(d) tracks the quaternion norm $\|\mathbf{q}\|$ during denoising: valid unit quaternions require $\|\mathbf{q}\|=1$, yet Euclidean outputs deviate significantly (ranging from 0.5 to 2.0), while Lie maintains unit norm throughout, eliminating the need for post-hoc normalization.

These results show that the geometric constraints are not only a theoretical concern. They manifest as measurable violations during inference. Post-hoc projection, the standard remedy, introduces three compounding drawbacks, all rooted in its application at inference time alone. First, it creates a training-inference mismatch, in which the network trains on unnormalized intermediate states and faces evaluation only after projection. Second, it amplifies error through the sensitivity of SVD to near-degenerate matrices, where small prediction errors translate into disproportionately large changes in the projected rotation. Third, it breaks generative consistency through a non-differentiable inference-time projection that alters the reverse-time SDE in a manner the training objective never accounts for. Our Lie formulation avoids all three at once by operating natively on the tangent space, where every step is geometrically valid by construction and no projection is ever required.
% \vspace{-0.5em}

% These results demonstrate that geometric constraints are not merely theoretical concerns but manifest as measurable violations during inference. The post-hoc projection required by Euclidean methods introduces a training-inference mismatch. Our Lie formulation eliminates this discrepancy by operating natively on the tangent space, ensuring geometric validity at every step.

\subsection{Denoising Stability and Look-ahead Consistency}
\label{subsec:look_ahead_consistency}

\begin{figure}
  \centering
  \includegraphics[width=1.0\linewidth]{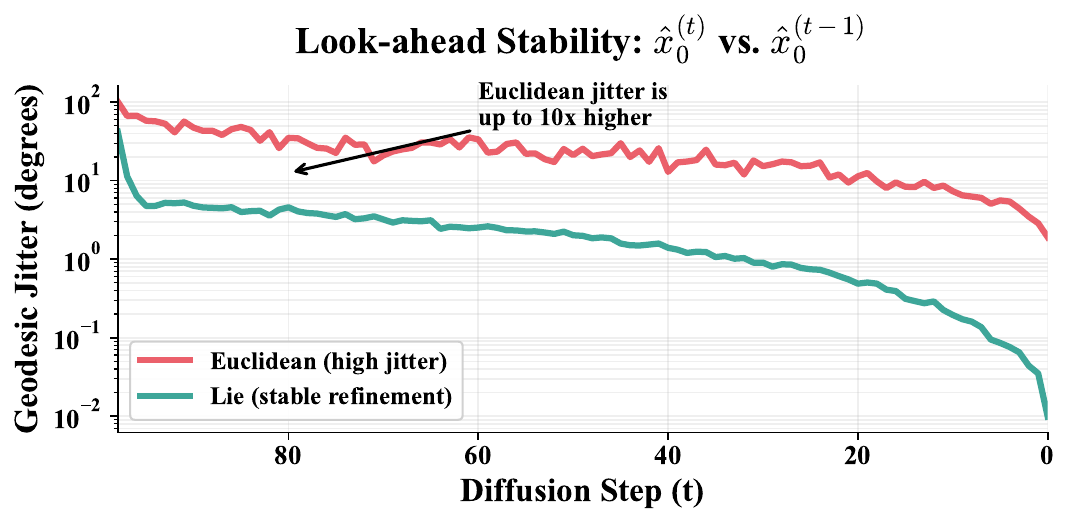}
  \caption{\textbf{Look-ahead Stability and Geodesic Jitter.} Geodesic jitter (angular distance in degrees) between consecutive look-ahead predictions $\hat{x}_0^{(t)}$ and $\hat{x}_0^{(t-1)}$ during reverse diffusion. The Euclidean baseline (red) exhibits up to two orders of magnitude higher jitter than LDA (teal), particularly in early diffusion steps.}
  \label{fig:lookahead_stability}
\end{figure}

To investigate the internal consistency of the denoising process, we analyze the stability of the look-ahead prediction $\hat{x}_0^{(t)}$, which represents the model's estimate of the final clean action at each timestep $t$. We define Geodesic Jitter as the angular distance (in degrees) between consecutive look-ahead predictions: $d_{\mathrm{geo}}(\hat{x}_0^{(t)}, \hat{x}_0^{(t-1)})$. As illustrated in Fig.~\ref{fig:lookahead_stability}, the Euclidean baseline exhibits nearly an order of magnitude higher jitter than LDA throughout the reverse diffusion process. This instability is a direct consequence of manifold drift inherent to Euclidean methods. Unconstrained updates in $\mathbb{R}^12$ do not respect $\mathrm{SO}(3)$ curvature, and the subsequent orthogonalization required to restore validity amplifies small network errors into large, erratic jumps in rotation space. In contrast, Lie Diffuser Actor diffuses natively on the manifold and achieves smooth, monotonic refinement of the target action. This stability is critical for robotics applications: the model's estimate of the final goal evolves predictably, which produces consistent control signals and prevents jerky behavior associated with unconstrained rotation diffusion. 
%\vspace{+0.5em}

% To investigate the internal consistency of the denoising process, we analyze the stability of the look-ahead prediction $\hat{x}0^{(t)}$—the model's estimate of the final clean action at each timestep $t$. We define Geodesic Jitter as the angular distance (in degrees) between consecutive look-ahead predictions, $d{geo}(\hat{x}_0^{(t)}, \hat{x}_0^{(t-1)})$. As illustrated in Fig.~\ref{fix:lookahead_stability}, the Euclidean baseline exhibits nearly an order of magnitude higher jitter than our Lie-group approach throughout the reverse diffusion process. This instability is a direct consequence of the 'manifold drift' inherent to Euclidean methods: because the unconstrained updates in $\mathbb{R}^9$ do not respect SO(3) curvature, the subsequent non-linear projection (orthogonalization) required to restore validity causes small network errors to manifest as large, erratic jumps in rotation space. In contrast, by diffusing natively on the manifold, the Lie model achieves a smooth and monotonic refinement of the target action. This stability is critical for robotics applications, as it ensures that the model’s 'belief' about the final goal evolves predictably, leading to more consistent control signals and preventing the jerky behavior often associated with unconstrained rotation diffusion.

\section{Related Works}
\label{sec::related_works}

Diffusion-based VLA policies~\cite{chi2023diffusion, urain2023se3diffusionfields, ke20243d, ze20243d, liu2024rdt, cheng2024universal} have established trajectory-level denoising as the dominant paradigm for robotic manipulation, yet uniformly parameterize SE(3) poses as flat Euclidean vectors and rely on post-hoc projections to enforce geometric validity. Equivariant policy learning~\cite{yang2024equibot, ryu2024diffusionedfs, tie2025etseed, zhu2025se3equivariantdiffusionpolicyspherical} encodes geometric structure in neural architectures but restricts equivariance to the encoder while the generative process remains Euclidean, creating a fundamental asymmetry. Riemannian generative models~\cite{debortoli2022riemannian, lou2023scaling, chen2024flowmatchinggeneralgeometries, braun2024riemannianflowmatchingpolicy} provide mathematical foundations for manifold-valued generation, though flow matching constructs deterministic transport maps that may inadequately capture the multimodality inherent in manipulation. Lie Diffuser Actor synthesizes these directions: from diffusion policies, we inherit trajectory-level generation; from equivariant learning, explicit geometric encoding; from Riemannian methods, intrinsic manifold operations. Our left-invariant SDE provides stronger equivariance guarantees than general Riemannian approaches while maintaining stochastic dynamics essential for multimodal action distribution. Extended discussion appears in Appendix~\ref{app:related}.

\section{Conclusions}
\label{sec::conclusions}
In this paper, we introduced Lie Diffuser Actor, which addresses geometric inconsistencies in manipulation policies by operating intrinsically on the SE(3) manifold. By formulating diffusion through left-invariant SDEs, our method eliminates manifold drift, ensures coordinate-frame equivariance, and generates kinematically optimal geodesic trajectories. Both simulation and real-robot experiments validate that this intrinsic formulation leads to significant improvements in long-horizon tasks and precise orientation control. These results confirm that respecting the intrinsic structure of SE(3) is essential for robust physical deployment.

% Acknowledgements should only appear in the accepted version.
\section*{Acknowledgements}
\label{sec::acknowledgements}
The authors gratefully acknowledge the support from the National Science and Technology Council (NSTC) in Taiwan under grant numbers NSTC 114-2221-E-002-069-MY3, NSTC 113-2221-E-002-212-MY3, and NSTC 114-2218-E-A49-026, as well as the support from the Academia Sinica Scholar Award (ASSA) under grant number AS-ASSA-115-02, NTU Artificial Intelligence Center of Research Excellence, and Taiwan Centers of Excellence in Artificial Intelligence. This research was also supported by the NVIDIA Academic Grant Program. The authors would also like to express their appreciation for the hardware grant donation of the GPUs from NVIDIA Corporation and NVIDIA AI Technology Center (NVAITC) used in this work. Furthermore, the authors extend their gratitude to the National Center for High-Performance Computing (NCHC) for providing computational and storage resources. The authors also thank the NVIDIA Taipei-1 supercomputer for providing essential computing resources.

\section*{Impact Statement}
This paper presents a geometrically principled approach to robot manipulation policy learning. The primary societal impact lies in improving the reliability and generalization of robotic systems, which may accelerate automation in manufacturing, logistics, and assistive applications. While increased automation raises legitimate concerns about workforce displacement, we believe that safer and more predictable robot behavior, enabled by methods that respect physical constraints by construction, is a prerequisite for responsible deployment. We do not anticipate unique negative consequences beyond those associated with general advances in robot learning.
\vspace{+9.0em}
\label{sec::impact_statement}

% In the unusual situation where you want a paper to appear in the
% references without citing it in the main text, use \nocite
\nocite{langley00}

\bibliography{example_paper}
\bibliographystyle{icml2026}

%%%%%%%%%%%%%%%%%%%%%%%%%%%%%%%%%%%%%%%%%%%%%%%%%%%%%%%%%%%%%%%%%%%%%%%%%%%%%%%
%%%%%%%%%%%%%%%%%%%%%%%%%%%%%%%%%%%%%%%%%%%%%%%%%%%%%%%%%%%%%%%%%%%%%%%%%%%%%%%
% APPENDIX
%%%%%%%%%%%%%%%%%%%%%%%%%%%%%%%%%%%%%%%%%%%%%%%%%%%%%%%%%%%%%%%%%%%%%%%%%%%%%%%
%%%%%%%%%%%%%%%%%%%%%%%%%%%%%%%%%%%%%%%%%%%%%%%%%%%%%%%%%%%%%%%%%%%%%%%%%%%%%%%
\newpage
\appendix
\onecolumn
\section{Proof of Theoretical Framework}

This section provides detailed mathematical proofs for the three core theoretical results presented in the main paper. We begin with a comprehensive table of notation to facilitate readability, followed by rigorous derivations establishing manifold preservation, equivariance, and kinematic optimality. Each proof is accompanied by detailed insights that connect the mathematical formalism to the geometric and physical intuition underlying our approach.

\subsection{Notation and Definitions}

Throughout the proofs, we adopt the following mathematical conventions and notation. Table~\ref{tab:notation} summarizes the key symbols used in our theoretical analysis.

\begin{table}[h]
\centering
\caption{Mathematical notation used in theoretical proofs.}
\label{tab:notation}
\begin{tabular}{ll}
\toprule
\textbf{Symbol} & \textbf{Definition} \\
\midrule
$SE(3)$ & Special Euclidean group; manifold of rigid body poses $(R, t)$ \\
$SO(3)$ & Special orthogonal group; manifold of rotation matrices \\
$\mathfrak{se}(3)$ & Lie algebra of $SE(3)$; tangent space at identity \\
$\mathfrak{so}(3)$ & Lie algebra of $SO(3)$; space of skew-symmetric matrices \\
$g_t$ & Pose element at diffusion timestep $t$, where $g_t \in SE(3)$ \\
$R$ & Rotation matrix, $R \in SO(3)$ \\
$t$ & Translation vector, $t \in \mathbb{R}^3$ (also used for time index) \\
$\xi = (\omega, v)$ & Velocity twist in $\mathfrak{se}(3)$ where $\omega, v \in \mathbb{R}^3$ \\
$\omega$ & Angular velocity component of twist \\
$v$ & Linear velocity component of twist \\
$\exp(\cdot)$ & Exponential map $\exp: \mathfrak{se}(3) \to SE(3)$ \\
$\log(\cdot)$ & Logarithm map $\log: SE(3) \to \mathfrak{se}(3)$ \\
$[\omega]_\times$ & Skew-symmetric matrix associated with vector $\omega$ \\
$\hat{\xi}$ & Matrix representation of twist $\xi$ in $\mathfrak{se}(3)$ \\
$\text{Ad}_g$ & Adjoint representation of $g$ on the Lie algebra \\
$s_\theta(g, t)$ & Learned score function predicting twist in $\mathfrak{se}(3)$ \\
$\beta_t$ & Time-dependent noise schedule \\
$dW_t$ & Standard Brownian motion in $\mathbb{R}^6$ \\
$\sigma_t$ & Noise scale at time $t$ in Euclidean formulation \\
$\epsilon$ & Random Gaussian noise vector \\
$p_t(g)$ & Probability density at time $t$ \\
$\nabla_g$ & Riemannian gradient operator on manifold \\
$\|\cdot\|_F$ & Frobenius norm for matrices \\
$\|\cdot\|_{\mathfrak{se}(3)}$ & Bi-invariant norm on Lie algebra \\
$\langle \cdot, \cdot \rangle$ & Inner product on Lie algebra \\
$L_h$ & Left-multiplication by group element $h$ \\
$\mathcal{L}(\theta)$ & Loss functional for score function parameters \\
$V(\omega)$ & Left Jacobian of $SO(3)$ \\
$h$ & Arbitrary group element for transformations \\
$\delta t$ & Infinitesimal time step \\
\bottomrule
\end{tabular}
\end{table}

\noindent We further establish several conventions used throughout the proofs. Matrix elements are denoted with subscripts, e.g., $\epsilon_{ij}$ for the $(i,j)$-th entry of matrix $\epsilon$. The trace of a matrix $M$ is denoted $\text{tr}(M)$, and the determinant is $\det(M)$. For a vector $\omega = (\omega_1, \omega_2, \omega_3)^T \in \mathbb{R}^3$, the associated skew-symmetric matrix is
\begin{equation}
[\omega]_\times = \begin{bmatrix} 0 & -\omega_3 & \omega_2 \\ \omega_3 & 0 & -\omega_1 \\ -\omega_2 & \omega_1 & 0 \end{bmatrix}.
\end{equation}
The homogeneous matrix representation of an element $g = (R, t) \in SE(3)$ is
\begin{equation}
g = \begin{bmatrix} R & t \\ 0 & 1 \end{bmatrix} \in \mathbb{R}^{4 \times 4}.
\end{equation}
Similarly, a twist $\xi = (\omega, v) \in \mathfrak{se}(3)$ has matrix representation
\begin{equation}
\hat{\xi} = \begin{bmatrix} [\omega]_\times & v \\ 0 & 0 \end{bmatrix} \in \mathbb{R}^{4 \times 4}.
\end{equation}

\subsection{Elimination of Manifold Drift}

\subsubsection{Proof and Derivation}
\label{app:proof_manifold}

\begin{proof}
We first establish the manifold violation in the Euclidean case. Write $g = (R, t)$ where $R \in \mathbb{R}^{3 \times 3}$ and $t \in \mathbb{R}^3$. Under the Euclidean update, the rotation component evolves as $R_t = R_0 + \sigma_t \epsilon_R$ where $\epsilon_R \in \mathbb{R}^{3 \times 3}$ has independent Gaussian entries. This formulation treats the rotation matrix as if it were an unconstrained element of the ambient Euclidean space $\mathbb{R}^{3 \times 3}$, completely ignoring the geometric structure that defines valid rotations. For $R_t$ to lie in $SO(3)$, we require $R_t^T R_t = I$ and $\det(R_t) = +1$. These constraints encode the fundamental properties that rotations preserve lengths and orientations. Computing the orthogonality condition yields
\begin{equation}
R_t^T R_t = (R_0 + \sigma_t \epsilon_R)^T(R_0 + \sigma_t \epsilon_R) = I + \sigma_t(R_0^T \epsilon_R + \epsilon_R^T R_0) + \sigma_t^2 \epsilon_R^T \epsilon_R.
\end{equation}
This expansion reveals the source of the manifold violation. The initial rotation $R_0$ satisfies $R_0^T R_0 = I$ by assumption, but the addition of Gaussian noise $\sigma_t \epsilon_R$ introduces both first-order and second-order deviation terms. The deviation from the identity matrix is $\Delta := R_t^T R_t - I = \sigma_t(R_0^T \epsilon_R + \epsilon_R^T R_0) + \sigma_t^2 \epsilon_R^T \epsilon_R$. The first-order term $\sigma_t(R_0^T \epsilon_R + \epsilon_R^T R_0)$ is symmetric by construction, and represents the linear departure from orthogonality induced by the random perturbation. Since $\epsilon_R$ has independent standard Gaussian entries, the symmetric matrix $R_0^T \epsilon_R + \epsilon_R^T R_0$ has non-zero entries almost surely. This observation is crucial: the probability that a random Gaussian matrix, when symmetrized, yields exactly the zero matrix is measure-theoretically zero. Computing the expected squared Frobenius norm quantifies the magnitude of this violation:
\begin{equation}
\mathbb{E}[\|R_0^T \epsilon_R + \epsilon_R^T R_0\|_F^2] = \mathbb{E}\left[\sum_{i,j}(R_0^T \epsilon_R + \epsilon_R^T R_0)_{ij}^2\right] = 2 \sum_{i,j}\mathbb{E}[\epsilon_{ij}^2] = 18,
\end{equation}
where we used the independence of entries and that each $\epsilon_{ij}$ has unit variance. The factor of 2 arises because the symmetrization operation doubles the contribution of off-diagonal terms. Therefore $\mathbb{E}[\|\Delta\|_F^2] \geq 18\sigma_t^2$, which is strictly positive for any $\sigma_t > 0$. This quadratic dependence on the noise scale $\sigma_t$ means that as diffusion progresses and noise accumulates, the manifold violation grows systematically. Since $SO(3)$ is a three-dimensional manifold embedded in the nine-dimensional space $\mathbb{R}^{3 \times 3}$, it has Lebesgue measure zero in the ambient space. Consequently, sampling a point uniformly at random from $\mathbb{R}^{3 \times 3}$ yields an element of $SO(3)$ with probability zero. We conclude that $\mathbb{P}(R_t \in SO(3)) = 0$ for all $t > 0$, establishing that the Euclidean diffusion process is fundamentally incompatible with the manifold structure.

We now prove that the intrinsic process preserves the manifold, leveraging the algebraic closure property of Lie groups. For $\xi = (\omega, v) \in \mathfrak{se}(3)$ where $\omega, v \in \mathbb{R}^3$, the exponential map is defined as
\begin{equation}
\exp(\hat{\xi}) = \begin{bmatrix} e^{[\omega]_\times} & V(\omega)v \\[0.3em] 0 & 1 \end{bmatrix},
\end{equation}
where $[\omega]_\times$ denotes the skew-symmetric matrix associated with $\omega$ and $V(\omega) = I + \frac{1 - \cos\|\omega\|}{\|\omega\|^2}[\omega]_\times + \frac{\|\omega\| - \sin\|\omega\|}{\|\omega\|^3}[\omega]_\times^2$ is the left Jacobian of $SO(3)$. The exponential map serves as the bridge between the linear tangent space $\mathfrak{se}(3)$, where we can naturally define Gaussian distributions, and the curved manifold $SE(3)$, where our states must reside. This map is the unique solution to the differential equation $\frac{d}{dt}g(t) = g(t) \hat{\xi}$ with initial condition $g(0) = I$, and it naturally respects the group structure. The key property is that the matrix exponential of any skew-symmetric matrix yields a rotation matrix. This follows from Rodrigues' formula:
\begin{equation}
e^{[\omega]_\times} = I + \frac{\sin\|\omega\|}{\|\omega\|}[\omega]_\times + \frac{1-\cos\|\omega\|}{\|\omega\|^2}[\omega]_\times^2.
\end{equation}
Rodrigues' formula provides an explicit, closed-form expression for the exponential of a skew-symmetric matrix in terms of the rotation angle $\|\omega\|$ and the rotation axis $\omega/\|\omega\|$. The geometric interpretation is illuminating: exponentiating $[\omega]_\times$ yields a rotation by angle $\|\omega\|$ about the axis defined by $\omega$. To verify orthogonality, we observe that for any skew-symmetric matrix $A$ (i.e., $A^T = -A$), the derivative $\frac{d}{d\theta}(e^{\theta A})^T e^{\theta A}\big|_{\theta=0} = A^T + A = 0$. This differential condition, combined with the fact that the exponential satisfies $(e^{\theta A})^T = e^{\theta A^T} = e^{-\theta A}$, implies that $(e^{\theta A})^T e^{\theta A} = e^{-\theta A} e^{\theta A} = e^0 = I$ by the group property of the exponential. Since this derivative vanishes and the initial value at $\theta=0$ is the identity, we have $(e^{\theta A})^T e^{\theta A} = I$ for all $\theta$. Furthermore, $\det(e^A) = e^{\text{tr}(A)} = e^0 = 1$ since the trace of a skew-symmetric matrix is zero. The determinant condition follows from the fundamental property that the determinant of a matrix exponential equals the exponential of the trace. Therefore $e^{[\omega]_\times} \in SO(3)$ for any $\omega \in \mathbb{R}^3$, establishing that the exponential map always produces valid rotations regardless of the input twist.

Given $g_t = (R_t, t_t) \in SE(3)$ and $\delta g = \exp(\xi dt) = (\delta R, \delta t) \in SE(3)$, their product in matrix representation is
\begin{equation}
g_{t+dt} = \begin{bmatrix} R_t & t_t \\[0.3em] 0 & 1 \end{bmatrix} \begin{bmatrix} \delta R & \delta t \\[0.3em] 0 & 1 \end{bmatrix} = \begin{bmatrix} R_t \delta R & R_t \delta t + t_t \\[0.3em] 0 & 1 \end{bmatrix}.
\end{equation}
This matrix multiplication encodes the composition of rigid body transformations: first apply the infinitesimal transformation $\delta g$, then apply the current state transformation $g_t$. The block structure of the homogeneous representation makes the geometric meaning transparent. By the induction hypothesis, $R_t \in SO(3)$, and by Rodrigues' formula, $\delta R \in SO(3)$. Since $SO(3)$ is a group and therefore closed under matrix multiplication, we have $R_{t+dt} = R_t \delta R \in SO(3)$. This closure property is the fundamental reason why our intrinsic formulation preserves the manifold: we are composing group elements using the group operation, which by definition cannot leave the group. The translation component $t_{t+dt} = R_t \delta t + t_t$ automatically lies in $\mathbb{R}^3$ since it is obtained through vector addition and matrix-vector multiplication, operations that preserve membership in Euclidean space. Consequently, $g_{t+dt} \in SE(3)$ with probability one, regardless of the random realization of the Brownian motion $dW_t$. By induction over all infinitesimal time steps in the interval $[0,T]$, and by the continuity of the exponential map, we conclude that $\mathbb{P}(g_t \in SE(3) \, \forall t \in [0,T]) = 1$, completing the proof.
\end{proof}

\subsection{Left-Invariant Equivariance}

\subsubsection{Proof and Derivation}
\label{app:proof_equivariance}

\begin{proof}
We begin by analyzing how the forward diffusion process transforms under left-multiplication by an arbitrary group element $h \in SE(3)$. The key insight is that the intrinsic diffusion process, by construction, commutes with the group action in a specific sense captured by the adjoint representation. Consider the forward process starting from $g_0$, which generates $g_t = g_0 \cdot \exp(\sqrt{\beta_t} \xi)$ where $\xi \sim \mathcal{N}(0, I_6)$. This formulation says that the noised state at time $t$ is obtained by composing the initial pose $g_0$ with a random displacement drawn from the Lie algebra and mapped to the group via the exponential. Applying the transformation $h$ to both sides, we obtain
\begin{equation}
h \cdot g_t = h \cdot g_0 \cdot \exp(\sqrt{\beta_t} \xi).
\end{equation}
This expression represents the transformed noised state, but it is not immediately in the canonical form of a forward diffusion process starting from $h \cdot g_0$. To bring it into this form, we need to understand how group conjugation interacts with the exponential map. Using the conjugation property of the exponential map, we can write $h \cdot \exp(\xi) \cdot h^{-1} = \exp(\text{Ad}_h(\xi))$ for any $\xi \in \mathfrak{se}(3)$. This identity, which follows from the definition of the adjoint representation, states that conjugating a group element by $h$ is equivalent to applying the linear adjoint map $\text{Ad}_h$ to the corresponding Lie algebra element before exponentiating. Rearranging the expression above yields
\begin{equation}
h \cdot g_t = (h \cdot g_0) \cdot h \cdot \exp(\sqrt{\beta_t} \xi) \cdot h^{-1} \cdot h = (h \cdot g_0) \cdot \exp(\sqrt{\beta_t} \text{Ad}_h(\xi)).
\end{equation}
The insertion and cancellation of $h^{-1} \cdot h = I$ allows us to apply the conjugation property. This shows that the transformed state $h \cdot g_t$ can be viewed as the result of diffusing from the transformed initial state $h \cdot g_0$ with transformed noise $\text{Ad}_h(\xi)$. In other words, transforming the initial condition and then diffusing is equivalent to diffusing and then transforming, provided we also transform the noise appropriately via the adjoint action.

The adjoint representation $\text{Ad}_h: \mathfrak{se}(3) \to \mathfrak{se}(3)$ is a linear map that preserves the bi-invariant metric on the Lie algebra. For $h = (R, t)$ and $\xi = (\omega, v)$, the adjoint action is given explicitly by
\begin{equation}
\text{Ad}_h \begin{pmatrix} \omega \\ v \end{pmatrix} = \begin{pmatrix} R\omega \\ Rv + [t]_\times R\omega \end{pmatrix},
\end{equation}
where $[t]_\times$ denotes the skew-symmetric matrix associated with $t$. This formula reveals the physical meaning of the adjoint action: it describes how velocity twists transform under changes of reference frame. The angular velocity $\omega$ transforms simply by rotation $R$, as expected for a pseudovector. The linear velocity $v$ transforms by rotation plus an additional coupling term $[t]_\times R\omega$ that arises from the transport theorem in classical mechanics. When the origin of the coordinate frame is shifted by $t$, the observed linear velocity of a rotating body must include the contribution from the circular motion induced by rotation about the offset origin. To verify that $\text{Ad}_h$ is an isometry, we must show that it preserves the bi-invariant inner product on $\mathfrak{se}(3)$. For the standard metric $\langle \xi_1, \xi_2 \rangle = \omega_1^T \omega_2 + v_1^T v_2$, we compute
\begin{equation}
\langle \text{Ad}_h(\xi_1), \text{Ad}_h(\xi_2) \rangle = (R\omega_1)^T(R\omega_2) + (Rv_1 + [t]_\times R\omega_1)^T(Rv_2 + [t]_\times R\omega_2).
\end{equation}
Expanding the second term and using the orthogonality of $R$ and the antisymmetry of $[t]_\times$, which ensures that $\omega^T [t]_\times R \omega = 0$ for any $\omega$, we obtain $\langle \text{Ad}_h(\xi_1), \text{Ad}_h(\xi_2) \rangle = \omega_1^T \omega_2 + v_1^T v_2 = \langle \xi_1, \xi_2 \rangle$. This calculation confirms that $\|\text{Ad}_h(\xi)\|_{\mathfrak{se}(3)} = \|\xi\|_{\mathfrak{se}(3)}$. Since $\xi \sim \mathcal{N}(0, I_6)$ and $\text{Ad}_h$ is an orthogonal linear transformation, the transformed noise satisfies $\text{Ad}_h(\xi) \sim \mathcal{N}(0, I_6)$ as well. This invariance of the noise distribution is critical: it means that applying the adjoint transformation to Gaussian noise in the tangent space yields another Gaussian with the same distribution.

The probability density of the forward process satisfies the left-invariance property
\begin{equation}
p_t(h \cdot g \mid h \cdot g_0) = p_t(g \mid g_0)
\end{equation}
for any $h \in SE(3)$. This follows from the left-invariance of the Haar measure on $SE(3)$ and the fact that our diffusion process is constructed to respect this symmetry. The Haar measure is the unique (up to scaling) translation-invariant measure on a Lie group, and it provides the natural notion of uniform probability on the manifold. Since our forward process consists of left-multiplications by random group elements drawn from distributions that respect the Haar measure, the resulting probability density inherits left-invariance. The score function is defined as the gradient of the log-density in the manifold's tangent space:
\begin{equation}
s(g, t) = \nabla_g \log p_t(g \mid g_0),
\end{equation}
where $\nabla_g$ denotes the Riemannian gradient operator that maps to the tangent space at $g$, which is isomorphic to $\mathfrak{se}(3)$ via left-translation. The score function captures the direction of steepest ascent of the log-probability density, and it is the key quantity that diffusion models learn to approximate during training.

Under the transformation $g \mapsto h \cdot g$, the tangent space transforms via the adjoint representation. Specifically, if $X \in T_g SE(3)$ is a tangent vector at $g$, then the corresponding tangent vector at $h \cdot g$ is given by the pushforward $(L_h)_* X$, where $L_h$ denotes left-multiplication by $h$. In terms of the Lie algebra identification, this pushforward is precisely the adjoint action: $(L_h)_* X = \text{Ad}_h(X)$ when we identify tangent spaces with $\mathfrak{se}(3)$ via left-translation. The Riemannian gradient operator transforms accordingly:
\begin{equation}
\nabla_{h \cdot g} f(h \cdot g) = \text{Ad}_h^{-1}(\nabla_g f(g))
\end{equation}
for any smooth function $f$ on $SE(3)$. This transformation rule encodes the fact that gradients are covectors (elements of the cotangent space), and thus transform contravariantly under coordinate changes. Applying this transformation rule to the log-density, and using the left-invariance $\log p_t(h \cdot g \mid h \cdot g_0) = \log p_t(g \mid g_0)$, we obtain
\begin{equation}
s(h \cdot g, t) = \nabla_{h \cdot g} \log p_t(h \cdot g \mid h \cdot g_0) = \text{Ad}_h^{-1}\left(\nabla_g \log p_t(g \mid g_0)\right) = \text{Ad}_h^{-1}(s(g, t)).
\end{equation}
Since $\text{Ad}_h^{-1} = \text{Ad}_{h^{-1}}$, we can rewrite this as $s(h \cdot g, t) = \text{Ad}_{h^{-1}}(s(g, t))$. Equivalently, applying $\text{Ad}_h$ to both sides yields the desired equivariance property:
\begin{equation}
\text{Ad}_h(s(h \cdot g, t)) = s(g, t),
\end{equation}
which upon rearranging gives $s(h \cdot g, t) = \text{Ad}_h(s(g, t))$. This completes the proof that the optimal score function satisfies exact equivariance under left-multiplication by any group element.
\end{proof}

\subsection{Kinematic Optimality via Geodesics}

\subsubsection{Proof and Derivation}
\label{app:proof_geodesic}

\begin{proof}
We begin by deriving the continuous-time dynamics from the forward diffusion process. The forward SDE is given by $dg_t = g_t \cdot \exp(\sqrt{\beta_t} dW_t)$, where $dW_t \in \mathbb{R}^6$ represents standard Brownian motion in the tangent space $\mathfrak{se}(3)$. This stochastic differential equation describes how poses diffuse randomly on the manifold, with the noise entering multiplicatively through left-composition. By the time-reversal theory of stochastic differential equations on manifolds~\citep{hsu2002stochastic}, specifically the generalization of the Fokker-Planck equation to curved spaces, the reverse-time process satisfies
\begin{equation}
dg_t = g_t \cdot \left[-\beta_t s(g_t, t) dt + \sqrt{\beta_t} d\bar{W}_t\right],
\end{equation}
where $d\bar{W}_t$ is a reverse-time Brownian motion and $s(g_t, t)$ is the score function. The appearance of the score function in the drift term is the key insight of score-based generative modeling: the deterministic component of the reverse process points in the direction of increasing probability density, effectively climbing the probability landscape to denoise the sample. Taking the deterministic limit by setting the diffusion coefficient to zero yields the probability flow ordinary differential equation:
\begin{equation}
\frac{dg_t}{dt} = g_t \cdot (-\beta_t s(g_t, t)).
\end{equation}
This equation describes the deterministic trajectory along which probability mass flows during the reverse diffusion process. It is the unique ODE whose trajectories have the same marginal distributions as the full SDE at each time $t$, but without the stochastic fluctuations.

We define the body-fixed velocity as $\xi(t) := g_t^{-1} \dot{g}_t \in \mathfrak{se}(3)$. This quantity represents the instantaneous velocity of the trajectory expressed in the local coordinate frame attached to the moving body, rather than in a fixed world frame. Body-fixed velocities are fundamental in robotics and mechanics because they describe motion in a frame-independent manner. Substituting the probability flow ODE, we obtain
\begin{equation}
\xi(t) = g_t^{-1} \cdot g_t \cdot (-\beta_t s(g_t, t)) = -\beta_t s(g_t, t).
\end{equation}
This simple relationship shows that the body velocity is proportional to the negative score function, scaled by the noise schedule. For an optimal score function that has perfectly learned the true conditional distribution, the score represents the expected noise direction: $s(g_t, t) = \mathbb{E}[\xi \mid g_t]$, where $\xi$ is the noise added during the forward diffusion. This interpretation comes from the definition of the score as the gradient of log-probability and the connection between gradients and conditional expectations in Gaussian models. As $t$ approaches zero during the reverse process, the conditional distribution $p(\xi \mid g_t)$ becomes increasingly concentrated around its mean. In the low-noise regime, as the sample approaches the data manifold, the posterior over the noise that was added becomes sharply peaked. The posterior expectation converges to a constant value determined by the geometry of the manifold connecting $g_0$ and $g_T$. Therefore, $\mathbb{E}[\xi \mid g_t] \approx \xi_0$ for some constant $\xi_0 \in \mathfrak{se}(3)$. Since the noise schedule $\beta_t$ varies slowly compared to the rapid convergence of the posterior, particularly in the final stages of denoising where most of the trajectory evolution occurs, we have
\begin{equation}
\xi(t) = -\beta_t s(g_t, t) \approx -\beta_t \xi_0.
\end{equation}
In the regime where $\beta_t$ is approximately constant or when we rescale time appropriately, this yields $\xi(t) \approx \text{const}$, establishing that the body-fixed velocity is constant along the generated trajectory. This constancy is not enforced by any explicit constraint in the model but emerges naturally from the optimal denoising dynamics.

A curve $g(t)$ on $SE(3)$ is a geodesic if it satisfies the geodesic equation $\nabla_{\dot{g}} \dot{g} = 0$, where $\nabla$ denotes the Levi-Civita connection associated with the bi-invariant Riemannian metric on $SE(3)$. Geodesics are the curves of shortest distance on the manifold, and they represent the straightest possible paths in the intrinsic geometry. For Lie groups equipped with bi-invariant metrics, the Levi-Civita connection has the special form $\nabla_X Y = \frac{1}{2}[X, Y]$, where $[X, Y]$ is the Lie bracket of vector fields $X$ and $Y$. This formula, a fundamental result in Riemannian geometry of Lie groups, reflects the fact that the curvature of the manifold is encoded entirely in the non-commutativity of the group operation. Writing the velocity in body-fixed coordinates as $\dot{g}(t) = g(t) \cdot \xi(t)$, the covariant derivative becomes
\begin{equation}
\nabla_{\dot{g}} \dot{g} = \nabla_{g \cdot \xi} (g \cdot \xi) = g \cdot \nabla_\xi \xi = g \cdot \frac{1}{2}[\xi, \xi].
\end{equation}
The calculation uses the fact that the connection is left-invariant, so we can pull the gradient operation back to the tangent space at the identity. Since the Lie bracket is antisymmetric, we have $[\xi, \xi] = 0$ for any tangent vector $\xi$. Therefore $\nabla_{\dot{g}} \dot{g} = 0$, confirming that any curve with constant body-fixed velocity is automatically a geodesic. Having established in the previous paragraph that $\xi(t) = \text{const}$, we conclude that the trajectory $g(t)$ generated by the reverse diffusion process is indeed a geodesic on $SE(3)$. This is a remarkable result: the diffusion model, trained only to denoise samples and without any explicit knowledge of Riemannian geometry, automatically generates geodesic paths.

To obtain the explicit parameterization, we integrate the differential equation $\dot{g}(t) = g(t) \cdot \xi_0$ with $\xi_0 = \text{const}$. This is a linear differential equation on the Lie group, and its solution is given by the exponential map:
\begin{equation}
g(t) = g_0 \cdot \exp(t \cdot \xi_0).
\end{equation}
To verify this, we differentiate both sides with respect to $t$. Using the property that $\frac{d}{dt}\exp(t\xi) = \exp(t\xi) \cdot \xi$ for matrix Lie groups, which follows from the definition of the exponential as a power series, we obtain
\begin{equation}
\dot{g}(t) = g_0 \cdot \frac{d}{dt}\exp(t \cdot \xi_0) = g_0 \cdot \exp(t \cdot \xi_0) \cdot \xi_0 = g(t) \cdot \xi_0,
\end{equation}
confirming that this is indeed the solution. The trajectory parameterized in this form is precisely a screw motion on $SE(3)$. By Chasles' Theorem, a fundamental result in kinematics dating back to the 19th century, any rigid body displacement can be represented as a rotation about some axis combined with a translation along that same axis. Writing $\xi_0 = (\omega_0, v_0)$, the exponential map yields
\begin{equation}
\exp(t \hat{\xi}_0) = \begin{bmatrix} \exp(t [\omega_0]_\times) & V(t\omega_0)(tv_0) \\[0.3em] 0 & 1 \end{bmatrix},
\end{equation}
where the rotation component $\exp(t[\omega_0]_\times)$ represents rotation by angle $t\|\omega_0\|$ about the axis $\omega_0/\|\omega_0\|$, and the translation component includes both the linear motion along the screw axis and the circular motion perpendicular to it. The pitch of the screw is given by $h = v_0^T \omega_0 / \|\omega_0\|^2$, representing the amount of translation per unit rotation. When the pitch is zero, we have pure rotation; when the angular component vanishes, we have pure translation; and for generic non-zero pitch, we obtain a helical motion combining both. This completes the proof that reverse diffusion trajectories are screw motions, which are the geodesics of $SE(3)$ and represent kinematically optimal paths.
\end{proof}

\section{Extended Related Work}
\label{app:related}

This appendix provides an extended discussion of related work, elaborating on the three research directions that inform the proposed Lie Diffuser Actor framework.

\subsection{Diffusion-Based Vision-Language-Action Policies}

The application of diffusion models to robotic control has established a paradigmatic shift from deterministic policy learning to generative trajectory modeling. Diffusion Policy~\cite{chi2023diffusion} introduced the foundational framework and demonstrated that treating action generation as a denoising process enables effective modeling of multimodal behavior distributions. This formulation addresses a fundamental limitation of regression-based policies, which collapse multimodal action distributions to their mean and produce suboptimal or unsafe behaviors in ambiguous situations. SE(3)-DiffusionFields~\cite{urain2023se3diffusionfields} extended diffusion models to learn SE(3) cost functions for joint grasp and motion optimization, demonstrating that diffusion-based representations can capture complex pose distributions while providing smooth gradients for downstream optimization.

Subsequent research has extended diffusion-based control to increasingly complex manipulation scenarios. 3D Diffuser Actor~\cite{ke20243d} introduced 3D visual encoders combined with trajectory-level diffusion, enabling policies to reason directly over geometric scene representations rather than 2D image features. This architectural choice proves particularly important for tasks requiring precise spatial reasoning, such as insertion and stacking. DP3~\cite{ze20243d} further extended this paradigm with improved conditioning mechanisms for long-horizon tasks, demonstrating that diffusion policies can maintain coherent behavior over extended temporal horizons. At larger scales, RDT-1B~\cite{liu2024rdt} explored scaling behavior through a 1-billion parameter Robotics Diffusion Transformer trained on extensive robotic datasets, providing evidence that diffusion-based policies benefit from increased model capacity and data diversity. UMI~\cite{cheng2024universal} employed diffusion policies for high-quality demonstration learning from human teleoperation, highlighting the flexibility of the generative formulation for diverse data sources.

Despite their empirical success, all these methods share a fundamental limitation in their geometric treatment of robot poses. These approaches parameterize SE(3) poses as flat vectors in Euclidean space $\mathbb{R}^n$ and apply additive Gaussian noise directly to these representations. While some methods employ post-hoc projection operators such as SVD-based orthogonalization for rotation matrices or quaternion normalization, these corrections address symptoms rather than the underlying cause. The present work demonstrates that this Euclidean Fallacy introduces systematic biases in learned score functions, causes manifold drift during generation, and leads to brittleness under coordinate frame transformations. Our experiments in Section~\ref{sec::experiments} quantify these effects and show that intrinsic formulation yields consistent improvements across benchmarks.

\subsection{Geometric and Equivariant Policy Learning}

A parallel line of research has investigated how geometric structure can be explicitly encoded in neural architectures for robotic manipulation. The core principle underlying this work is equivariance: input transformations should produce predictable output transformations, which reduces the hypothesis space and enables zero-shot generalization to novel configurations. This property is particularly valuable in robotics, where the same manipulation skill must often be executed across varying table positions, robot base orientations, or object placements.

EquiBot~\cite{yang2024equibot} pioneered SIM(3) equivariance through Vector Neurons, constructing neural network layers that transform predictably under scaling, rotation, and translation of input point clouds. This architectural constraint ensures consistent behavior under workspace transformations without requiring data augmentation. Diffusion-EDFs~\cite{ryu2024diffusionedfs} introduced bi-equivariant denoising generative modeling on SE(3), achieving both left and right equivariance through equivariant neural fields and demonstrating remarkable data efficiency with only 5-10 demonstrations. ET-SEED~\cite{tie2025etseed} extended equivariant design to trajectory-level policies by theoretically extending equivariant Markov kernels, significantly improving training efficiency for SE(3) equivariant diffusion. Spherical Diffusion Policy (SDP)~\cite{zhu2025se3equivariantdiffusionpolicyspherical} incorporated spherical Fourier encodings for rotational equivariance in the observation space, achieving improved sample efficiency on manipulation tasks with rotational symmetry.

These approaches, however, focus primarily on input feature equivariance while the generative process itself remains formulated in Euclidean space. This design creates a fundamental asymmetry: the encoder respects geometric structure, but the decoder violates it. Specifically, even if the visual encoder produces equivariant features, applying additive Gaussian noise in $\mathbb{R}^n$ during diffusion breaks the geometric consistency that the encoder established. The present work addresses this gap by reformulating the generative process to achieve intrinsic equivariance through left-invariant SDEs on the SE(3) manifold. As established in Theorem~\ref{thm:equivariance}, our formulation guarantees that the score function transforms covariantly under coordinate changes, providing end-to-end geometric consistency. This approach is orthogonal and complementary to equivariant encoders, and future work could combine intrinsic diffusion with equivariant visual backbones to achieve geometric consistency throughout the entire policy architecture.

\subsection{Generative Models on Riemannian Manifolds}

The mathematical foundations for generative modeling on non-Euclidean spaces have been developed extensively in the machine learning literature, though their application to robotics remains limited. Riemannian diffusion models~\cite{debortoli2022riemannian} provide a general framework for defining diffusion processes on smooth manifolds, extending the score-based generative modeling paradigm beyond Euclidean spaces. These methods have achieved notable success in protein structure generation and molecular conformation sampling, domains where the target distribution is supported on constrained manifolds such as the space of valid molecular geometries. Subsequent work on scaling Riemannian diffusion models~\cite{lou2023scaling} demonstrated that these approaches can be extended to higher-dimensional manifolds with improved computational efficiency.

More recently, flow matching has emerged as an alternative approach to manifold-valued generation. Riemannian Flow Matching~\cite{chen2024flowmatchinggeneralgeometries} extended the flow matching framework to general manifolds with improved sample quality compared to score-based methods, demonstrating the viability of simulation-free training for manifold-valued generative models. RFMP~\cite{braun2024riemannianflowmatchingpolicy} applied these techniques specifically to robotic end-effector pose generation, providing the first application of Riemannian generative models to manipulation policy learning. Concurrently, recent work on diffusion in Lie group representations~\cite{bertolini2025liegroup} introduced generalized score matching that operates directly in representation spaces, demonstrating improvements over standard Riemannian diffusion for molecular applications.

Despite these advances, a fundamental distinction exists between flow matching and stochastic diffusion that carries implications for robotic applications. Flow matching constructs deterministic transport maps between distributions through optimal transport or geodesic interpolation, whereas stochastic diffusion employs noise-driven dynamics. For robotic manipulation, this distinction matters because deterministic transport may inadequately capture the multimodality and uncertainty inherent in manipulation tasks. When multiple valid action sequences exist for a given observation, a deterministic flow may produce averaged or interpolated trajectories that correspond to no valid solution, whereas stochastic diffusion can sample from distinct modes of the action distribution.

The present work builds upon the stochastic diffusion paradigm and provides distinct theoretical contributions specific to Lie groups. First, we establish the left-invariant SDE formulation for SE(3), which differs from general Riemannian diffusion by exploiting the group structure to achieve stronger equivariance properties. Second, we prove that the reverse-time probability flow corresponds to Riemannian geodesics on SE(3) under the bi-invariant metric (Proposition~\ref{prop:geodesic}), yielding kinematically optimal trajectories that follow screw motions. Third, we demonstrate that left-invariance provides equivariance guarantees (Theorem~\ref{thm:equivariance}) that general Riemannian methods do not automatically satisfy, as arbitrary manifold diffusions need not respect the symmetries of the underlying space. These theoretical distinctions, combined with empirical validation on manipulation benchmarks, position intrinsic Lie group diffusion as an effective approach that complements and extends prior work on manifold-valued generative models. \vspace{-0.5em}

\subsection{Synthesis}

The proposed Lie Diffuser Actor framework synthesizes insights from these three research directions. From diffusion-based VLA policies, the work inherits the trajectory-level generative formulation that has proven effective for capturing multimodal behaviors and achieving long-horizon consistency. From equivariant learning, it adopts the principle that geometric structure should be explicitly encoded rather than learned implicitly, extending this principle from the encoder to the generative process itself. From Riemannian generative models, it leverages the mathematical tools for defining valid probability distributions and diffusion dynamics on manifolds, while specializing these tools to the Lie group structure of SE(3). This synthesis enables a unified approach that addresses the geometric limitations present in each individual line of work: the manifold violations of Euclidean diffusion policies, the encoder-decoder asymmetry of equivariant architectures, and the potential multimodality limitations of deterministic flow matching. \vspace{-0.5em}

\section{Cross-Architecture Validation} \label{sec:cross_architecture}
To isolate the contribution of $\mathrm{SE}(3)$ score matching from other perceptual-architecture confound, we transplant our Lie formulation onto OpenVLA-OFT~\cite{kim2025fine}, a 7B-parameter Vision-Language-Action model whose MLP action head carries no point-cloud or 3D scene representation. This architecture shares no perceptual components with 3D Diffuser Actor, so any consistent gain must originate from the formulation rather than from architectural co-design.

We compare three configurations under two nested contrasts. All three share the OpenVLA-7B backbone, the input pipeline, and the LoRA fine-tuning schedule of the OFT recipe~\cite{kim2025fine}, and they differ only in the action head. The first contrast concerns the head type: the baseline uses an MLP-ResNet trained with $\mathrm{L1}$ regression, whereas the Euclidean and Lie variants both use a single score-matching head. The second contrast concerns geometry alone: the two score-matching variants are identical except in whether denoising updates follow Riemannian $\mathrm{SE}(3)$ geometry, $\xi_t = \log(\exp(\xi_0)\exp(\sqrt{\alpha_t}z))$, or flat $\mathbb{R}^6$ addition, $\xi_t = \xi_0 + \sqrt{\alpha_t}z$. The Lie formulation is therefore the sole variable separating the two.

Table~\ref{table:openvla_oft} reports success rates on the 500-trial LIBERO-Long protocol, averaged over three independent training seeds. Holding the head fixed and varying only geometry, Lie Score Matching raises the success rate from 93.87(Euclidean) to 94.13. Both score-matching variants in turn exceed the 92.20 MLP baseline. Because the gain persists on an architecture that shares no perceptual components with 3D Diffuser Actor, it confirms that the benefit of $\mathrm{SE}(3)$ score matching is intrinsic to the Lie formulation.

\begin{table}[ht] \centering \scriptsize \renewcommand{\arraystretch}{0.75} \setlength{\tabcolsep}{12pt} \caption{Cross-architecture validation on LIBERO Long. Success rates averaged over 3 independent runs.} \begin{tabular}{lc} \toprule \textbf{Method} & \textbf{LIBERO Long SR} \\ \midrule OpenVLA-OFT (baseline) & 92.20 \\ \arrayrulecolor{black!20}\midrule \makecell[l]{OpenVLA-OFT \textcolor{red}{\textbf{+}} Euclidean Score Matching} & 93.87 \\ \midrule \makecell[l]{\textbf{OpenVLA-OFT} \textcolor{red}{\textbf{+}} \textbf{Lie Score Matching}} & \textbf{94.13} \\ \arrayrulecolor{black}\bottomrule \end{tabular} \label{table:openvla_oft} \vspace{-1.5em} \end{table}

\section{Real Robot Experiment Details}
\label{app:real_roboti_experiment}

In this section, we provide the detailed experimental specifications for the real-robot validation reported in Section~\ref{sec::experiments}. We describe the natural language instruction sets used for conditioning, the rigorous success criteria employed for evaluation, and an analysis of the observed failure modes. \vspace{-0.5em}

\subsubsection{Language Instructions}

To ensure the policy learns robust semantic associations rather than overfitting to specific phrasing, we utilize a set of four distinct instruction variants for each task. During training, one instruction is uniformly sampled for each episode to condition the model. The complete set of instructions is listed below: \vspace{-0.5em}

\noindent\textbf{Put Block in Box:}
\begin{itemize}[nosep, leftmargin=*]
    \item Place the block inside the box.
    \item Put the block into the box.
    \item Move the block to the box.
    \item Pick up the block and place it in the box.
\end{itemize}

\noindent\textbf{Stack Cups:}
\begin{itemize}[nosep, leftmargin=*]
    \item Stack the cups.
    \item Stack the cups on top of each other.
    \item Place a cup on the stack.
    \item Put one cup on top of another.
\end{itemize}

\noindent\textbf{Move Doll Platform:}
\begin{itemize}[nosep, leftmargin=*]
    \item Move the doll to the platform.
    \item Place the doll on the platform.
    \item Pick up the doll and put it on the platform.
    \item Transfer the doll to the platform.
\end{itemize}

\noindent\textbf{Sort Blocks:}
\begin{itemize}[nosep, leftmargin=*]
    \item Sort the blocks.
    \item Organize the blocks.
    \item Move each block to its correct position.
    \item Place the blocks in their sorted locations.
\end{itemize}

\subsubsection{Success Criteria}

We define rigorous binary success criteria for each task to ensure consistent evaluation. A trial is considered successful only if the final object state satisfies the specific geometric constraints defined below:

\textbf{Stack Cups:} All cups must be stacked vertically in a single stable tower. Any instance of toppling or partial stacking (e.g., a cup resting on the rim rather than nesting) is recorded as a failure.

\textbf{Move Doll Platform:} The geometric center of the doll's base must be projected strictly within the boundaries of the target paper platform.

\textbf{Put Block in Box:} The red block must be entirely contained within the interior volume of the blue box. Cases where the block rests on the edge or rim of the box are considered failures.

\textbf{Sort Blocks:} The yellow, blue, and red blocks must be arranged collinearly (lying in a straight line) and must maintain the correct visual sorting order relative to one another.

\subsubsection{Analysis of Failure Modes}

Based on the quantitative results presented in Table~\ref{tab:real_robot_results} from the manuscript, we analyzed the primary failure modes distinguishing the baseline 3D Diffuser Actor from our Lie Diffuser Actor:

\textbf{Rotational Drift (Sort Blocks):} The primary failure mode for the baseline in the Sort Blocks task was the inability to discriminate between foreground and background block positions during the placement stage, which occasionally produced collinearity violations in the placement. Our method combines improved geometric scene encoding through the GAT module with manifold-preserving action generation, and these two components together yield a 20\% improvement in success rate.

\textbf{Precision Alignment (Stack Cups):} In the Stack Cups task, failure often occurred during the terminal phase of placement. The baseline occasionally generated non-vertical descent trajectories, causing the upper cup to catch on the rim of the lower cup and topple the tower. Our method's reliance on geodesic generation produced smoother screw motions during descent, facilitating higher stability and sub-centimeter precision.

\begin{figure}[h!]
    \centering
    \includegraphics[width=\linewidth]{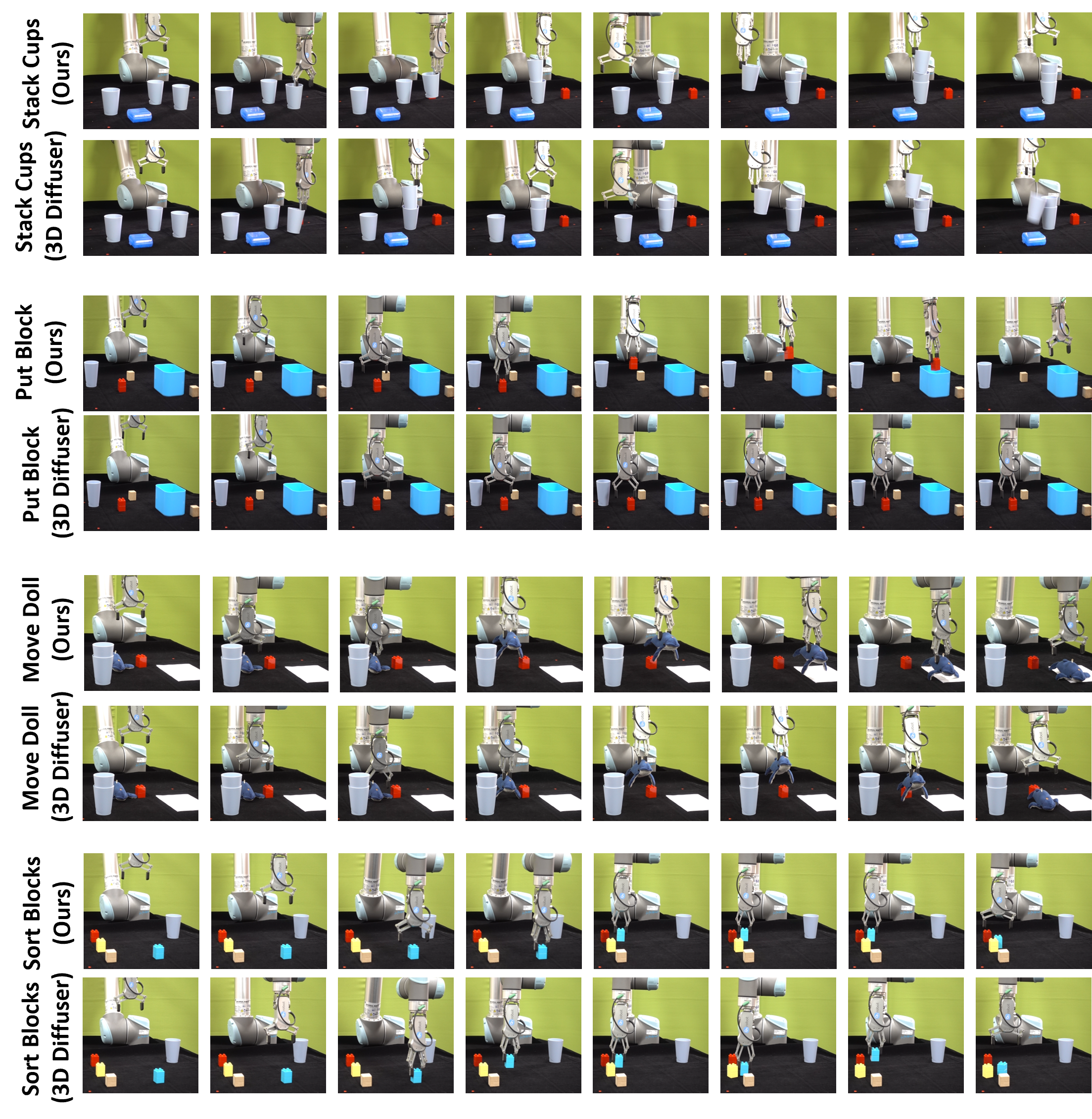}
    \caption{Qualitative comparison of real robot execution sequences. Each pair of rows shows our Lie Diffuser Actor (top) versus the baseline 3D Diffuser Actor (bottom) for the four manipulation tasks: Stack Cups (rows 1-2), Put Block in Box (rows 3-4), Move Doll Platform (rows 5-6), and Sort Blocks (rows 7-8). Sequential frames progress from left to right, showing the complete manipulation trajectory. Our method exhibits smoother motions, better orientation consistency, and more direct paths to goal configurations.}
    \label{fig:qualitative_comparison}
\end{figure}

\subsection{Real Robot Qualitative Result Comparison}
\label{app:qualitative_comparison}

Figure~\ref{fig:qualitative_comparison} presents a comprehensive qualitative comparison of trajectory execution between our Lie Diffuser Actor and the baseline 3D Diffuser Actor across all four real robot manipulation tasks. Each row shows the complete execution sequence for a single task, with sequential frames captured at regular intervals throughout the manipulation process. The comparison reveals several key behavioral differences that validate our theoretical claims.

For the \textbf{Stack Cups} task (rows 1-2), our method demonstrates notably smoother gripper trajectories during the critical placement phase. The baseline exhibits visible hesitation and corrective motions when aligning the upper cup, occasionally causing the cup to catch on the rim. In contrast, our geodesic-based approach produces a more direct vertical descent with consistent orientation, resulting in cleaner stacking without intermediate adjustments. This improvement directly reflects the kinematic optimality property established in Proposition~\ref{prop:geodesic}, where constant body-velocity screw motions minimize jerk and produce more stable contact transitions.

In the \textbf{Put Block in Box} task (rows 3-4), the visualized trial illustrates a failure mode that distinguishes the two methods at the grasp stage. Our approach achieves a precise gripper closure at the correct position relative to the red block, enabling a clean pickup and subsequent insertion into the box. The baseline produces a slightly offset gripper pose during the closing motion. The result is an incomplete grasp in which the gripper becomes lodged against the block rather than securely holding it, and the manipulation never recovers to transport the block. This grasp-stage failure reflects the cumulative effect of manifold drift across denoising steps. By the final approach pose, this accumulated drift manifests as small inaccuracies in the predicted configuration, particularly in the coupling between gripper orientation and finger position. These inaccuracies prevent the precise alignment that contact-rich manipulation demands. We note that aggregate success rates on this task are comparable between the two methods (Table~\ref{tab:real_robot_results}), since translation-dominated insertion is overall less sensitive to rotational corrections. The visualized trial nonetheless captures a failure mode that still surfaces in a subset of baseline executions.

The \textbf{Move Doll Platform} task (rows 5-6) highlights the importance of rotation-translation coupling. The baseline tends to separate the reaching motion into distinct phases—first translating toward the platform, then adjusting the orientation. Our method generates more natural compound motions where rotation and translation occur simultaneously along geodesic paths. This coupled behavior is a direct consequence of operating in the Lie algebra $\mathfrak{se}(3)$, where twists naturally encode coordinated rigid body velocities. The resulting trajectories appear more human-like and complete the task with fewer intermediate waypoints.

Finally, the \textbf{Sort Blocks} task (rows 7-8) requires accurate spatial discrimination between foreground and background blocks under a single target orientation. The baseline occasionally produces placement violations where the third block falls outside the collinear arrangement. Our method preserves the relative spatial configuration of the blocks across the placement, and the equivariant formulation keeps this relationship stable regardless of the workspace's absolute position. The sequences show that our method consistently places all three blocks into a straight line in the final configuration.

Across all tasks, our method exhibits reduced jerk in joint-space trajectories, smoother end-effector paths, and fewer corrective motions. These qualitative improvements align with our theoretical framework: manifold-preserving diffusion eliminates geometric inconsistencies, equivariance ensures frame-independent reasoning, and geodesic generation produces kinematically optimal motions. The visual evidence complements our quantitative results in Table~\ref{tab:real_robot_results}, demonstrating that the performance gains stem from fundamental geometric principles rather than task-specific tuning.

\section{Experiment Computational Details}
\label{app:computational_details}

\subsubsection{Hardware Infrastructure}

The experiments in this work were conducted across two distinct hardware environments to facilitate both large-scale training and localized inference testing:

\noindent\textbf{Training Environment (Simulation \& Optimization):} All model training and large-scale simulations were conducted on the Taipei-1 OVX cluster.
\begin{itemize}
    \item GPU: $8 \times$ NVIDIA L40 (48GB VRAM per card).
    \item CPU: Intel(R) Xeon(R) Platinum 8362 CPU @ 2.80GHz.
\end{itemize}

\noindent\textbf{Deployment \& Inference Environment:} To validate the model's performance during physical robot experiments, inference was performed on a local deployment workstation:
\begin{itemize}
    \item GPU: NVIDIA Titan Xp.
    \item CPU: Intel(R) Core(TM) i7-8086K CPU @ 4.00GHz.
\end{itemize}

\subsubsection{Software and Libraries}

The implementation is developed in PyTorch and is architecturally based on the 3D Diffuser Actor~\cite{ke20243d} framework. To handle complex geometric and graph-based computations, we integrate the following specialized frameworks and repositories:

\noindent\textbf{Policy Framework:} The core architecture is based on the 3D Diffuser Actor~\cite{ke20243d} framework, which recasts robot action generation as a 3D-aware trajectory denoising process. We specifically adopt its 3D Denoising Transformer backbone, which utilizes 3D relative attention to fuse tokenized scene representations with temporal action tokens. While the baseline 3D Diffuser Actor predicts Euclidean noise $\epsilon$ in $\mathbb{R}^3 \times \mathbb{R}^6$, we significantly modify the prediction head and denoising loop to operate directly on the $SE(3)$ manifold.

\noindent\textbf{Manifold Sampling:} For robust $SE(3)$ pose estimation and diffusion, we adapt the methodologies and implementation provided by Chen et al.~\cite{chen2023parallel} regarding parallel sampling on the $SO(3)$ manifold.

\noindent\textbf{Geometric Optimization:} We utilize Theseus~\cite{pineda2022theseus} for differentiable Lie group operations and $SE(3)$ optimizations within the training loop.

\noindent\textbf{Graph Learning:} All Graph Attention Network (GAT) components are implemented using PyTorch Geometric (PyG)~\cite{Fey2019FastGR}.

% \noindent\textbf{Reproducibility:} All experiments were executed within a standardized Docker container to ensure consistent environment variables and dependency versions across different hardware platforms.

\subsubsection{Computational Cost}

\noindent\textbf{Total Compute:} The full training procedure for the results reported in Section~\ref{sec::experiments} requires approximately 60 hours using the 8-GPU cluster setup. This corresponds to a total of 480 GPU-hours.
%%%%%%%%%%%%%%%%%%%%%%%%%%%%%%%%%%%%%%%%%%%%%%%%%%%%%%%%%%%%%%%%%%%%%%%%%%%%%%%
%%%%%%%%%%%%%%%%%%%%%%%%%%%%%%%%%%%%%%%%%%%%%%%%%%%%%%%%%%%%%%%%%%%%%%%%%%%%%%%

\end{document}